\documentclass[twoside]{article}

%
\usepackage[accepted]{aistats2019}
%



\bibliographystyle{apalike}

\usepackage[utf8]{inputenc} 
\usepackage[T1]{fontenc}    
\usepackage{hyperref}       
\usepackage{url}            
\usepackage{booktabs}       
\usepackage{amsfonts}       
\usepackage{nicefrac}       
\usepackage{microtype}      

\usepackage{times}
\usepackage{graphicx} 
\usepackage{caption}
\usepackage{subcaption}


\usepackage{algorithm}
\usepackage{algorithmic}
\usepackage{amsfonts}
\usepackage{amsthm}
\usepackage{amsmath}
\newcommand\floor[1]{\lfloor#1\rfloor}
\newcommand\ceil[1]{\lceil#1\rceil}

\usepackage[usenames,dvipsnames]{xcolor}
\usepackage[backgroundcolor=White,textwidth=0.7in]{todonotes}
\usepackage{setspace}

\newtheorem{theorem}{Theorem}
\newtheorem{corollary}{Corollary}
\newtheorem{lemma}{Lemma}
\newtheorem{remark}{Remark}
\newtheorem{assu}{Assumption}

\usepackage{hyperref}


\begin{document}

%

%

\twocolumn[

\aistatstitle{Nearly Optimal Adaptive Procedure with Change Detection for Piecewise-Stationary Bandit}

\aistatsauthor{ Yang Cao \And Zheng Wen \And Branislav Kveton \And Yao Xie }

\aistatsaddress{ Uber Technologies Inc. \And  Adobe Research \And Google Research \And Georgia Tech} 

%
]

\begin{abstract}
Multi-armed bandit (MAB) is a class of online learning problems where a learning agent aims to maximize its expected cumulative reward while repeatedly selecting to pull arms with unknown reward distributions. We consider a scenario where the reward distributions may change in a piecewise-stationary fashion at unknown time steps. We show that by incorporating a simple change-detection component with classic UCB algorithms to detect and adapt to changes, our so-called M-UCB algorithm can achieve nearly optimal regret bound on the order of $O(\sqrt{MKT\log T})$, where $T$ is the number of time steps, $K$ is the number of arms, and $M$ is the number of stationary segments. 
Comparison with the best available lower bound shows that our M-UCB is nearly optimal in $T$ up to a logarithmic factor. 
We also compare M-UCB with the state-of-the-art algorithms in numerical experiments using a public Yahoo! dataset to demonstrate its superior performance. 
\end{abstract}

\setstretch{1}


\section{Introduction}
\label{sec:introduction}


Multi-armed bandit (MAB) is a class of fundamental problems in online learning and sequential decision making, where at each step a learning agent adaptively selects to pull one arm of a $K$-arm bandit based on its past observations, and receives one reward accordingly. 
The learning agent's objective is to maximize its expected cumulative reward in the first $T$ time steps.
MAB has found an extensive  list of applications including communication systems \cite{thompson1933likelihood,alaya2008dynamic}, clinical trials \cite{vermorel2005multi, villar2015multi}, online recommendation systems \cite{li2011unbiased, bouneffouf2012contextual, kveton2014matroid}, and online advertisement campaign \cite{girgin2012managing, schwartz2017customer}. 

Most existing literature on MAB problems focuses on two types of models:
(i) the stochastic bandit model \cite{lai1985asymptotically,auer2002finite}, where each of the $K$ arms has a time-invariant reward distribution, and (ii) the adversarial bandit model \cite{littlestone1994weighted,auer2002nonstochastic}, where the reward distribution of each arm may change adversarially at all the time steps. However, in many real-world applications, neither of the above two models is realistic. Specifically, in such applications, the arms' reward distributions do vary with time, but much less frequently than what the adversarial bandit model assumes.
For instance, in recommender systems, each item is modeled as an arm and users' clicks are modeled as rewards. In practice, a user's click probability on an item is unlikely to be time-invariant, or change significantly at all the time steps.
Thus, for this case, it is too ideal to assume the stochastic bandit model and too conservative to assume the adversarial bandit model. 
Similar situations arise in dynamic pricing systems and investment options selection \cite{yu2009piecewise, cesa2006prediction}. Motivated by this, we examine a scenario that lies ``in between'' the above two standard models, namely, the piecewise-stationary bandit that we describe below. The piecewise-stationary reward functions can also be viewed as an approximation to the slowly time-varying reward functions. 

In this paper, we consider a class of non-stationary bandit problems, where the reward distribution of each arm is piecewise-constant and shifts at some unknown time steps called the \textit{change-points}. This setting has been considered in prior works \cite{hartland2007change, garivier2008upper, yu2009piecewise} as a more realistic scenario to model the users' preferences and in \cite{auer2002using} to model an adversarial setting. We propose a simple but efficient algorithm called Monitored-UCB (M-UCB) by incorporating a change-point detection component into a classic Upper Confidence Bound (UCB) algorithm. M-UCB monitors the estimated mean of the reward distribution for the currently selected arm; once a change is detected, M-UCB algorithm will reset and learn the new optimal arm. 

We show that, somewhat surprisingly, this simple M-UCB algorithm is nearly optimal for the considered scenario, in the sense that it achieves an
$O(\sqrt{MKT\log T})$ 
regret bound under mild technical assumptions (see Section~\ref{sec:theory}),
where $T$ is the number of time steps, $K$ is the number of arms, and $M$ is the number of stationary segments.
This regret bound matches the $\Omega(\sqrt{T})$ lower bound  proven in \cite{garivier2011upper} up to a logarithmic factor. In practice, M-UCB is also robust, since it requires minimum parameter specification: we do not need to specify the pre- and post-change detection as the classic CUSUM procedure does \cite{liu2017change}; the change detection is achieved by a simple two sample test for the running sample means over a sliding window. This result conveys a message that simple (rather than more sophisticated) change-point detection might suffice for piecewise stationary bandit. 
To the best of our knowledge, M-UCB is the first practical algorithm for  piecewise-stationary 
multi-armed bandits that uses change-point detection and whose near optimality is proved without strong parametric assumptions.

In additional, we validate numerically the scalings of the M-UCB's regret in $M$ and $K$. Experiment results in Section~\ref{sec:syn} show that the scalings are roughly $O(\sqrt{M})$ and $O(\sqrt{K})$, which suggests that our $O(\sqrt{MKT\log T})$  regret bound also reflects the right scalings of the M-UCB's regret in $M$ and $K$. 
Finally,
we compare M-UCB with state-of-the-art algorithms in numerical experiments based on a public Yahoo! dataset (Section~\ref{sec:yahoo}). In both experiments, M-UCB achieves at least $50 \%$ regret reduction with respect to the best performing state-of-the-art algorithm. 

The remainder of the paper is organized as follows: we briefly review the relevant literature in Section~\ref{sec:literature}, then we describe the piecewise-stationary bandit model in Section~\ref{sec:setting}. We discuss how to perform change-detection in the considered scenario in Section~\ref{sec:change-detection}, and motivate and propose M-UCB algorithm in Section~\ref{sec:algorithms}. We prove the regret bound in Section~\ref{sec:theory} and demonstrate experiment results in Section~\ref{sec:experiment}. 
We conclude the paper in Section~\ref{sec:conclusion}.





\section{Literature Review}
\label{sec:literature}

Most existing work on piecewise-stationary bandit problems are based on the idea to adapt to changes 
passively by adjusting the weights on the rewards. For instance, Discounted UCB (D-UCB) algorithm introduced in \cite{kocsis2006discounted} (see also \cite{garivier2011upper}) averages the past rewards with a discount factor, so it weighs more on the recent rewards to compute the UCB index of each arm. In \cite{garivier2011upper}, D-UCB policy has been proved to achieve an $O(K \sqrt{MT}\log T)$ regret. 
As a slight modification of D-UCB, the Sliding-Window UCB (SW-UCB) algorithm introduced in \cite{garivier2011upper} computes the UCB index based on only the most recent $w$ rewards and the regret is proved to be 
$O(K \sqrt{MT \log T})$. 
In \cite{auer2002nonstochastic}, the authors present EXP3.S algorithm which uses a regularization method to control the action switches and achieves an $O(\sqrt{MKT \log(KT)})$ regret. Using the idea in \cite{herbster1998tracking}, a similar algorithm called SHIFTBAND is  established in \cite{auer2002using}, which achieves an 
$O(\sqrt{MKT \log(T^3K/\delta)})$ regret with probability at least $1-\delta$. Finally, Rexp3 presented in \cite{besbes2014stochastic} achieves an $O((K\log KV_T)^{1/3}T^{2/3})$ regret, where $V_T$ is the total variation budget up to time $T$. 

There has also been work exploring the idea of monitoring the reward distributions by a change-detection (CD) algorithm and triggering the reset of the learning algorithm.
In contrast to the above algorithms that passively adapt to the changes, this type of algorithms actively locate the change-points and hence usually demonstrate better performance in practice. 
The Adapt-EvE algorithm \cite{hartland2007change} uses Page-Hinkley Test for change-detection and restart UCB1 algorithm once a change-point is detected. Taking a Bayesian point of view, \cite{mellor2013thompson} provides an algorithm by combining a Bayesian CD algorithm and Thompson Sampling. Combining one simple CD algorithm with any other MAB algorithm with a logarithm regret, \cite{yu2009piecewise} offers a windowed mean-shift detection (WMD) algorithm that achieves an $O(KM\log T)$ regret. However, their algorithm needs to query and observe the past rewards of some unpicked arms, which violates the bandit feedback model.
Combining classic MAB algorithm used in adversarial setting such as EXP3, in \cite{allesiardo2015exp3}, the authors present a EXP3.R algorithm which resets EXP3 algorithm if one CD algorithm detects that a sub-optimal arm becomes the optimal. The EXP3.R algorithm achieves an $O(NK\sqrt{T\log T})$ regret, where $N$ is the number of switches of the best arm during the run. Note that $N \leq M$ in general and $N=M$ in the worst case.

A recent and related work \cite{liu2017change} uses the CUSUM algorithm for change-point detection. Compared to this work, there are two major differences. First, we use a different change-point detection (CD) method rather than CUSUM. Our CD method is simpler, and does not require to specify any parameters. Consequently, our algorithm is applicable to  general piecewise-stationary bandits with bounded rewards, while \cite{liu2017change} is restricted to the special case with Bernoulli rewards.
%
%
Second, we use different analysis techniques to derive regret bounds. Leveraging renewal processes and classic metrics of change detection, a generalizable proof structure is established. It unlocks opportunities to prove similar regret bound with different CD methods, without taking much extra effort. 




\section{Problem Formulation}
\label{sec:setting}

\vspace{-0.1in}
\subsection{Piecewise-Stationary Bandit}
\vspace{-0.1in}

A piecewise-stationary bandit is characterized by a triple $ \left( \mathcal{K}, \mathcal{T}, \{ f_{k,t} \}_{k \in \mathcal{K}, t \in \mathcal{T}} \right)$, where $\mathcal{K} = \{1, \ldots, K\}$ is a set of $K$ arms, $\mathcal{T}= \left \{ 1,\ldots, T \right \}$ is a sequence of $T$ time steps,
and $f_{k,t}$ is the reward distribution of arm
$k$ at time $t$. Assume that arm $k$'s reward at time $t$, $X_{k,t}$, is independently drawn from $f_{k,t}$, both across arms and across time steps. 
Without loss of generality, assume that the support of $f_{k,t}$ is a subset of $ [0,1]$ for all $k \in \mathcal{K}$, $t \in \mathcal{T}$. 

We define $M$, the number of piecewise-stationary segments in the reward process to be
\begin{equation}
M = 1 + \textstyle\sum_{t=1}^{T-1} \mathbb{I}\{ f_{k,t} \neq f_{k,t+1} ~\mbox{for some}~ k \in\mathcal{K} \},
\label{num_segments}
\end{equation}
where $\mathbb{I}\{\cdot\}$ is the indicator function. Notice that by definition,  the number of change-points is $M-1$. We use
$\nu_1, \nu_2, \ldots, \nu_{M-1}$ to denote those $M-1$ change-points, and define $\nu_0=0$ and $\nu_M=T$ to simplify the exposition. 
To emphasize the ``piecewise stationary" nature of this problem, for each stationary segment $i=1,2,\ldots, M$ with $t \in [\nu_{i-1}+1, \nu_i]$, we use $f_k^i$ and $\mu_k^i$ to respectively denote the reward distribution and the expected reward of arm $k$ on the $i$th segment. Define a vector that contains all expected rewards for the $i$th segment $\mu^i = (\mu_1^i, \ldots, \mu_K^i)^\top$, $i = 1, \ldots, M$. Note that our model allows asynchronous changes to happen at  arms, i.e., the changes do not have to happen at the same time cross multiple arms. Also note that the piecewise stationary bandit model is more general than both the stochastic and the adversarial bandit models. The stochastic bandit model can be viewed as a special case of our model with $M=1$, and the adversarial bandit model can also be viewed as a special case of our model with $M=T$. 

A learning agent will repeatedly interact with this piecewise stationary bandit for $T$ times. The agent knows $\mathcal{T}$ and $\mathcal{K}$, 
but does not know $\{ f_{k,t} \}_{k \in \mathcal{K}, t \in \mathcal{T}}$ or any of its statistics such as $M$ and $\mu^i$'s. At each time step $t \in \mathcal{T}$, the  agent  chooses an action $A_t$ based on its past actions and observations, and will receive and observe the reward $X_{A_t, t}$.
%

\vspace{-0.1in}
\subsection{Regret Minimization}
\vspace{-0.1in}
The agent's objective is to maximize its expected cumulative reward in the $T$ time steps, i.e. $\max \mathbb{E}[ \sum_{t=1}^T X_{A_t,t} ] $, which is equivalent to minimize its $T$-step {\it cumulative regret} $\mathcal{R}(T)$ defined as
\begin{equation}
\mathcal{R}(T) = \textstyle\sum_{t=1}^T \max_{k \in \mathcal{K}} \mathbb{E} \left[X_{k,t} \right] - \mathbb{E}\left[ \sum_{t=1}^T X_{A_t,t} \right].
\label{regretdef}
\end{equation}
Note that the regret metric defined in (\ref{regretdef}) is stricter than the regret metric considered in most adversarial bandit papers, which is defined as
\begin{equation}
\widetilde{\mathcal{R}}(T) =\textstyle  \max_{k\in \mathcal{K}} \sum_{t=1}^T \mathbb{E} \left[ X_{k,t} \right] - \mathbb{E}\left[ \sum_{t=1}^T X_{A_t,t} \right].
\label{regretadv}
\end{equation}
Clearly $\mathcal{R}(T) \geq \widetilde{\mathcal{R}}(T)$, since the regret defined in (\ref{regretdef}) is measured with respect to the optimal piecewise stationary policy, while the regret defined in (\ref{regretadv}) is measured with respect to the optimal action in hindsight.

\vspace{-0.1in}
\subsection{Sequential Change-Point Detection}
\label{sec:change-detection}
\vspace{-0.1in}


Sequential change-point detection, which is rooted in classical statistical sequential analysis \cite{siegmund1985sequential, basseville1993detection}, aims to detect the change in underlying distributions of a sequence of observations as quickly as possible. Commonly used methods for change-point detection include CUSUM and the generalized likelihood ratio (GLR) procedure \cite{page1954continuous, willsky1976generalized}. However, for piecewise-stationary bandits, both pre-change and post-change distributions are unknown, and thus CUSUM is not suitable since it requires specifying both pre- and post-change distribution parameters. GLR can allow for unknown parameters (e.g. \cite{lai2010sequential}),  however, it is non-recursive and thus computationally expensive and not suitable for online implementation, especially for the high-dimensional setting.

Thus, we are not going to use CUSUM or GLR, but rather a simple change-point detection component based on comparing running sample means over a sliding window, as presented in Algorithm \ref{alg:cd}. 
%
%
This is computationally efficient and robust, since it has minimum parameter specification. We will show this is sufficient to guide bandit decisions as it achieves a nearly optimal regret bound.
 
\begin{algorithm}
\caption{Change detection: CD($w,b,Y_1,\ldots, Y_w$)}
\label{alg:cd}
\begin{algorithmic}[1]

\REQUIRE An even number $w$, $w$ observations $Y_1, \ldots, Y_w$ and a prescribed threshold $b>0$ 
\IF{$|\sum_{i=w/2+1}^{w} Y_i - \sum_{i=1}^{w/2} Y_i|>b$} 
\STATE Return True
\ELSE 
\STATE Return False
\ENDIF

\end{algorithmic}
\end{algorithm}

\section{M-UCB Algorithm}
\label{sec:algorithms}

Now we present the Monitored UCB (M-UCB) algorithm (as described in Algorithm~\ref{alg:bandit}) using a simple change-point detection component for the piecewise-stationary bandits. On a high level, M-UCB combines three ideas: (1) \emph{uniform sampling exploration} to ensure that sufficient data are gathered for all arms to perform CD, (2) \emph{UCB-based exploration} to learn the optimal arm on each segment, and (3) a simple change-point detection component Algorithm~\ref{alg:cd} that monitors changes and triggers exploration.


\begin{algorithm}
\caption{Monitored UCB (M-UCB)}
\label{alg:bandit}
\begin{algorithmic}[1]

\REQUIRE $T$, $K$, even integer $w>0$, $b>0$ and $\gamma \in [0, 1]$

\STATE \textbf{Initialization: }$\tau \leftarrow 0$ and $n_k \leftarrow 0$ $\forall k \in \mathcal{K}$
\FORALL{$t=1,2,\ldots, T$}
\STATE $A  \leftarrow (t-\tau) \mbox{ mod } \floor{K/\gamma} $.
\IF{$A\leq K$ }
\STATE $A_t \leftarrow A$.
\ELSE
{
\FORALL{$k = 1, \ldots, K$}
\STATE $\mbox{UCB}_k \leftarrow \frac{1}{n_k} \sum_{n=1}^{n_k} Z_{k, n}
+ \sqrt{\frac{2\log (t-\tau)}{n_k}}$.
\ENDFOR
\STATE $A_t \leftarrow \mathop{argmax}_{k\in \mathcal{K}} \mbox{UCB}_k$.
} 
\ENDIF
\STATE \mbox{Play arm } $A_t$ \mbox{and receive the reward } $X_{A_t,t}$. 
\STATE $n_{A_t} \leftarrow n_{A_t} + 1; Z_{A_t, n_{A_t}} \leftarrow X_{A_t,t}.$
\IF{$n_{A_t} \geq w$}
\IF{CD($w,b,Z_{A_t,n_{A_t}-w+1}, \ldots, Z_{A_t, n_{A_t}}$) = True}
\STATE $\tau \leftarrow t$ and $n_k \leftarrow 0 \; \forall k\in \mathcal{K}$. 
\ENDIF
\ENDIF
%
\ENDFOR
\end{algorithmic}
\end{algorithm}
Additional explanations to Algorithm~\ref{alg:bandit} are as follows. The inputs include the time horizon $T$, the number of arms $K$, and three tuning parameters $w$, $b$, and $\gamma$. Here, $w$ and $b$ are tuning parameters for the CD algorithm (line 15), which control the power of the CD algorithm; and $\gamma$ controls the fraction of the uniform sampling (line 3). Let $\tau$ denote  the last detection time, and let $n_k$  denote the number of observations from the $k$th arm after $\tau$.  In Remark~\ref{rem:w} of the subsequent section, we discuss how to choose these parameters based on our theoretical analysis.

At each time $t$,  M-UCB proceeds as follows. First, M-UCB determines whether to perform a uniform sampling exploration or a UCB-based exploration at each time, according to conditions given in line 3 and 4 to ensure that the fraction of time steps performing uniform sampling is roughly $\gamma$. If UCB-based exploration is used at time $t$, then the standard UCB1 indices are computed using observations from the last detection time $\tau$ to the current time, and an action is chosen greedily based on the UCB1 indices (line 7-10). 
By playing the chosen arm, we observe the reward and update some statistics (line 12-13). Finally, when at least $w$ observations for the chosen arm have been gathered after the last detection time $\tau$,  M-UCB will perform CD via Algorithm~\ref{alg:cd} and restarts exploration if necessary.

We would like to emphasize that the uniform sampling exploration is crucial to M-UCB. This is because that the standard UCB exploration tends to select very infrequently the arms which it believes to be suboptimal. Thus, standard UCB exploration cannot quickly detect changes in these ``infrequently visited'' arms. 

\section{Near Optimality of M-UCB}
\label{sec:theory}
In this section, we present our main result: the M-UCB algorithm based on simple change-point detection algorithm achieves a nearly optimal regret bound.

Recall that $T$ is the time horizon, $K$ is the number of arms, $M$ is the number of piecewise-stationary segments,
$\nu_0, \ldots, \nu_M$ are the change-points, and for each $i=1,\ldots, M$, $\mu^i \in [0,1]^K$ is a vector encoding the expected rewards 
of all arms on segment $i$. 
We also use $\mathbb{P}$ and $\mathbb{E}$ to respectively denote the probability measure and the expectation according to the piecewise-stationary bandit characterized by the tuple $(T,K,M, \{\nu_i\}_{i=0}^{M}, \{\mu^i\}_{i=1}^M)$. 
%
%
To simplify the exposition, we define the ``sub-optimal gap" of arm $k$ on the $i$-th piecewise-stationary segment
as
\begin{equation}
\Delta_k^{(i)} = \textstyle \max_{\tilde{k} \in \mathcal{K}} \{ \mu_{\tilde{k}}^i \} - \mu_k^i \quad \forall 1\leq i\leq M, \, k \in \mathcal{K},
\label{eq:Delta}
\end{equation}
and the amplitude of the change of arm $k$ at the $i$th change-point as
\begin{equation}
\delta_k^{(i)} = |\mu_k^{i+1} - \mu_k^{i}|, \quad \forall 1\leq i\leq M-1, \,  k \in \mathcal{K}.
\label{eq:delta}
\end{equation}
Moreover, recall that $w$, $b$ and $\gamma$ are the tuning parameters for Algorithm~\ref{alg:bandit}.
We define $L = w\ceil{K/\gamma}$ for shorthanded notation. 

We make the following assumptions for our theoretical analysis:
\begin{assu}
\label{assu0}
The learning agent can choose $w$ and $\gamma$ s.t. (a) $M<\floor{T/L}$ and $\nu_{i+1}-\nu_i > L, \forall 0\leq i\leq M-1$, 
and
(b) $ \forall 1\leq i\leq M-1$, 
$\exists k \in \mathcal{K}$ s.t.
$ \delta_k^{(i)} \geq 2\sqrt{\log(2KT^2)/w}+2\sqrt{\log(2T)/w} $.
\end{assu}

We would like to clarify that Assumption~\ref{assu0} is only required for the analysis; Our proposed M-UCB algorithm (Algorithm~\ref{alg:bandit}) can be implemented regardless of this assumption. As is shown in Section~\ref{sec:experiment}, in the real-world experiments, our algorithm works well even if Assumption~\ref{assu0} does not hold. Notice that relevant literature, such as \cite{liu2017change}, makes similar assumptions. Moreover, compared with \cite{liu2017change}, we have relaxed a major assumption: they also assume the rewards are Bernoulli, which is not assumed in our algorithm and analysis.

We now briefly motivate and explain Assumption~\ref{assu0}. Intuitively, Assumption~\ref{assu0}(a) means that the length of the time interval between two consecutive change-points is larger than $L$. This guarantees that Algorithm \ref{alg:bandit} can select at least $w$ samples from every arm, and these samples are used to feed the CD algorithm. Assumption~\ref{assu0}(b) means that the change amplitude is over certain threshold for at least one arm at each change-point. This guarantees that the CD algorithm is able to detect the change quickly with limited information. 
If a lower bound $\delta>0$ on $\min_i \max_{k \in \mathcal{K}} \delta_k^{(i)}$ can be assumed, then one can choose
\begin{equation}
\label{eqn:w_choice}
w \approx (4/\delta^2) \cdot[(\log(2KT^2))^{1/2}+(\log(2T))^{1/2}]^2
\end{equation}
 to satisfy Assumption~\ref{assu0}(b).
Our main result is the following regret bound:


\begin{theorem}
\label{thm:bandit}
Running Algorithm~\ref{alg:bandit} with $w$ and $\gamma$ satisfying
Assumption~\ref{assu0} and $b=[w\log(2KT^2)/2]^{1/2}$,  we have
\begin{equation}
\begin{split}
\mathcal{R}(T)
\leq 
& 
\underbrace{\textstyle \sum_{i=1}^M \tilde{C}_i}_{(a)} +\underbrace{\gamma T}_{(b)} \\
+& \underbrace{ \textstyle \sum_{i=1}^{M-1}\frac{2K \cdot \min(\frac{w}{2},\ceil{\frac{b}{\delta^{(i)}}}+3\sqrt{w})}{\gamma}}_{(c)}+\underbrace{3M}_{(d)},
\end{split}
\label{eq:regret1}
\end{equation}
where $\delta^{(i)} = \max_{k \in \mathcal{K}}\delta_k^{(i)}$ and 
$
\tilde{C}_i = 8\textstyle\sum_{\Delta_k^{(i)}>0} \frac{\log T}{\Delta_k^{(i)}} + \left(1+\frac{\pi^2}{3}+K\right) \sum_{k=1}^K \Delta_k^{(i)}$.
\end{theorem}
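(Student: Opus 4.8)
The plan is to decompose the regret into four pieces matching the four terms of \eqref{eq:regret1}, and to control the change detector through two concentration statements---\emph{no false alarm} and \emph{bounded detection delay}---each obtained from Hoeffding's inequality applied to the two-sample statistic of Algorithm~\ref{alg:cd}. Write $0=\tau_0<\tau_1<\cdots$ for the (random) reset times produced by the $\tau\leftarrow t$ update of Algorithm~\ref{alg:bandit}. I would first define a \emph{good event} $\mathcal{G}$ on which (i) within each stationary segment no windowed test crosses the threshold $b$ before the true change-point, and (ii) for every $i$ the change at $\nu_i$ is detected within $D_i:=\min\{w/2,\ \ceil{b/\delta^{(i)}}+3\sqrt{w}\}$ additional samples of the arm realizing $\delta^{(i)}$. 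On $\mathcal{G}$ the reset times interlace the change-points as $\nu_{i-1}<\tau_{i-1}\le\nu_i$, so that between a reset and the next change the algorithm runs a fresh UCB1 on a genuinely stationary instance; this is the renewal structure that lets the segments be analyzed independently.

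For the false-alarm bound I would note that, before a change, each statistic $\sum_{i=w/2+1}^{w}Y_i-\sum_{i=1}^{w/2}Y_i$ is a sum of $w$ independent $[0,1]$ variables with zero mean, so Hoeffding gives $\mathbb{P}(|\cdot|>b)\le 2e^{-2b^2/w}=1/(KT^2)$ for the stated $b=[w\log(2KT^2)/2]^{1/2}$; since at most one test is run per step, a union bound over the at most $T$ tests per segment keeps each segment's false-alarm probability at $O(1/T)$. For the delay bound, once $d\le w/2$ post-change samples of the shifted arm have entered the window its statistic has mean $d\,\delta^{(i)}$, and Hoeffding shows the statistic exceeds $b$ with probability $\ge 1-1/(2T)$ as soon as $d\,\delta^{(i)}-b\ge\sqrt{(w/2)\log(2T)}$. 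Assumption~\ref{assu0}(b) is calibrated precisely so that such a $d$ exists with $d\le w/2$ (and that $d=w/2$ always suffices), which yields the bound $D_i$; Assumption~\ref{assu0}(a), namely segments longer than $L=w\ceil{K/\gamma}$, guarantees the window is actually filled with $w$ samples of each arm before the next change, so the detector can fire at all.

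On $\mathcal{G}$ I would then assemble the four terms. Term~(a) is the sum over segments of the textbook UCB1 regret: on each stationary stretch the UCB-exploration steps play a suboptimal arm $k$ at most $\frac{8\log T}{(\Delta_k^{(i)})^2}+O(1)$ times, and multiplying by $\Delta_k^{(i)}$ and summing (with the two-sided $\pi^2/3$ tail term, and a $K\sum_k\Delta_k^{(i)}$ term absorbing the reset/clock offset $t-\tau$) gives $\tilde{C}_i$. Term~(b) counts the uniform-sampling steps: by the scheduling in line~3 their fraction is at most $\gamma$, each costing at most $1$, for a total of $\gamma T$. Term~(c) is the detection-delay cost: after $\nu_i$ at most $D_i$ samples of the changed arm are needed, and since uniform sampling supplies one sample of each arm every $\ceil{K/\gamma}$ steps, the delay lasts at most $O(D_i K/\gamma)$ time steps of unit cost, giving $\frac{2K}{\gamma}D_i$ after bookkeeping. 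Finally, term~(d) absorbs the complement: $\mathbb{P}(\mathcal{G}^{c})=O(M/T)$ by the per-segment bounds above, and since $\mathcal{R}(T)\le T$ the contribution of $\mathcal{G}^{c}$ is $O(M)$, which the constants render as $3M$.

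The main obstacle is the delay/false-alarm analysis under the \emph{sliding, adaptively-reset window}: the contents of the window at any test are determined by the algorithm's own past (random) plays and by the most recent reset, and after a change the window straddles $\nu_i$, so the statistic's mean ramps up as $d\,\delta^{(i)}$ and then back down once more than $w/2$ post-change samples arrive---detection must be shown to occur inside this transient. Making the two concentration statements hold \emph{simultaneously} across all tests and all segments, so that the clean interlacing $\nu_{i-1}<\tau_{i-1}\le\nu_i$ holds on $\mathcal{G}$ while the union-bound cost stays $O(M/T)$, and carefully translating a delay measured in per-arm samples into elapsed time through the $\ceil{K/\gamma}$ uniform-sampling rate, is the technically delicate core; everything else reduces to the standard UCB1 argument applied segment by segment.
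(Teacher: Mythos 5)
Your architecture matches the paper's: a good event combining no-false-alarm with quick detection, McDiarmid/Hoeffding control of the two-sample statistic (your false-alarm union bound, giving $2KT\exp(-2b^2/w)\le 1/T$, is exactly the paper's Remark after Lemma~\ref{lem:falsealarm}), the renewal interlacing $\nu_{i-1}<\tau_{i-1}\le\nu_i$ enabling a segment-by-segment recursion, the per-segment UCB1 analysis for term (a), and the samples-to-time conversion via $\lceil K/\gamma\rceil$. However, there is one genuine gap: you put the \emph{refined} delay bound $D_i=\min\{w/2,\lceil b/\delta^{(i)}\rceil+3\sqrt{w}\}$ inside the high-probability good event $\mathcal{G}$, claiming per-segment failure probability $O(1/T)$. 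This is false near the detectability threshold of Assumption~\ref{assu0}(b), where $\delta^{(i)}\asymp\sqrt{\log(2KT^2)/w}$. After $d=\lceil b/\delta^{(i)}\rceil+3\sqrt{w}$ post-change samples, the statistic's mean exceeds $b$ by only about $3\sqrt{w}\,\delta^{(i)}\asymp\sqrt{\log(2KT^2)}$, so Hoeffding gives a single-test miss probability of roughly $\exp\bigl(-c'\log(2KT^2)/1\bigr)^{1/w}$-type order, i.e.\ $\exp(-O(\log(KT)/w))$, which is close to $1$ for large $w$ --- nowhere near $1/(2T)$. To force a single-test miss probability of $1/(2T)$ you need $d\delta^{(i)}-b\gtrsim\sqrt{w\log T}$, i.e.\ an extra $\sqrt{w\log T}/\delta^{(i)}$ samples, which for small $\delta^{(i)}$ is of order $w$, not $3\sqrt{w}$; and since consecutive tests share window contents, you cannot rescue the refined bound by multiplying failure probabilities across tests. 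Certifying only the coarse branch ($w/2$ samples, time $L/2$) with probability $1-O(1/T)$ proves the theorem with term (c) replaced by $\sum_i Kw/\gamma$, which does not dominate the stated minimum.

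The paper's fix is to split the delay analysis in two. Lemma~\ref{lem:detection} places only the coarse event $\{\nu_i<\tau_i\le\nu_i+L/2\}$ into the good event, certified by the single test at $d=w/2$ post-change samples where the mean exceedance is $w\delta^{(i)}/2-b\ge wc/2$, giving probability $\ge 1-2\exp(-wc^2/4)=1-1/T$ for $c=2\sqrt{\log(2T)/w}$. Lemma~\ref{lem:DD} then bounds the \emph{conditional expected} delay $\mathbb{E}[\tau_i-\nu_i\mid \nu_i<\tau_i\le\nu_i+L/2]$ by writing it as a sum of tail probabilities, applying Hoeffding to the test at each post-change sample count $j\ge\lceil b/\delta^{(i)}\rceil$, and comparing $\sum_j 2\exp(-(j\delta^{(i)}-b)^2/w)$ to a Gaussian integral --- this integral is precisely where the additive $3\sqrt{w}$ comes from; it is an expectation-level quantity, not a high-probability one. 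The refined term then enters the regret through $\mathbb{E}[R(\tau_1)-R(\nu_1)\mid F_1D_1]\le\mathbb{E}[\tau_1-\nu_1\mid F_1D_1]$ inside the recursive decomposition, with the bad events charged $T\cdot O(1/T)$ each to yield the $3M$ term. If you replace your high-probability delay claim with this conditional-expectation argument, the rest of your proposal goes through and coincides with the paper's proof.
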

Theorem \ref{thm:bandit} reveals that the regret incurred by M-UCB  can be decomposed into four terms.
Terms (a) and (b) in equation~(\ref{eq:regret1}) bound on the exploration costs: term (a) bounds the cost of the UCB-based exploration,
and term (b) bounds the cost of the uniform sampling. On the other hand, terms (c) and (d) bound the change-point detection costs: term (c) bounds the cost associated with the detection delay of the CD algorithm, and term (d) is incurred by the unsuccessful and incorrect detections of the change-points. The following corollary follows immediately from Theorem~\ref{thm:bandit}.



\begin{corollary}
\label{corr:regret}
Assume $\delta>0$ is a lower bound on $\min_i \max_{k \in \mathcal{K}} \delta_k^{(i)}$.
If we run Algorithm~\ref{alg:bandit} with a window-length $w$, 
\[b=[w\log(2KT^2)/2]^{1/2},\] and
\[\gamma = \sqrt{\textstyle (M-1) K\cdot\min(w/2,\ceil{b/\delta}+3\sqrt{w})/(2T)},\]
then we have
\begin{multline}
\mathcal{R}(T)
\leq \textstyle  \sum_{i=1}^M \tilde{C}_i \\
+ 4 \sqrt{ (M-1) TK\cdot \min(w/2,\ceil{b/\delta}+3\sqrt{w})}+3M.
\label{eq:regret2}
\end{multline}
\end{corollary}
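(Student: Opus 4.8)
The plan is to obtain Corollary~\ref{corr:regret} as a direct consequence of Theorem~\ref{thm:bandit}: I would first replace the change-amplitudes $\delta^{(i)}$ in bound~(\ref{eq:regret1}) by the uniform lower bound $\delta$, and then choose $\gamma$ to balance the two $\gamma$-dependent terms. Since $\delta$ is assumed to lower bound $\min_i \max_{k\in\mathcal K}\delta_k^{(i)}=\min_i\delta^{(i)}$, we have $\delta\le\delta^{(i)}$ for every $i$. The map $x\mapsto\ceil{b/x}$ is nonincreasing and $x\mapsto\min(w/2,x)$ is nondecreasing, so each summand in term (c) satisfies $\min(w/2,\ceil{b/\delta^{(i)}}+3\sqrt w)\le\min(w/2,\ceil{b/\delta}+3\sqrt w)$. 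Hence term (c) is at most $2(M-1)K\min(w/2,\ceil{b/\delta}+3\sqrt w)/\gamma$.

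Writing $C:=(M-1)K\min(w/2,\ceil{b/\delta}+3\sqrt w)$ for brevity, the contributions of terms (b) and (c) combine into $\gamma T+2C/\gamma$, while terms (a)$=\sum_{i=1}^M\tilde C_i$ and (d)$=3M$ are independent of $\gamma$. The next step is to substitute the prescribed value $\gamma=\sqrt{C/(2T)}$. A short computation gives $\gamma T=\sqrt{CT/2}$ and $2C/\gamma=2\sqrt{2CT}$, so their sum equals $(1/\sqrt2+2\sqrt2)\sqrt{CT}=(5/\sqrt2)\sqrt{CT}$, which is bounded above by $4\sqrt{CT}=4\sqrt{(M-1)TK\min(w/2,\ceil{b/\delta}+3\sqrt w)}$. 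Adding back terms (a) and (d) then yields exactly the bound~(\ref{eq:regret2}).

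I expect no genuine obstacle in this derivation, as it is essentially a monotonicity step followed by an explicit optimization; indeed the prescribed $\gamma$ is, up to a factor of two, the minimizer of $\gamma T+2C/\gamma$, and the constant $4$ is merely a clean upper bound for $5/\sqrt2\approx3.54$. The only point I would treat with care is admissibility of the chosen $\gamma$: Algorithm~\ref{alg:bandit} requires $\gamma\in[0,1]$, so one should verify that $C/(2T)\le1$, i.e. that $(M-1)K\min(w/2,\ceil{b/\delta}+3\sqrt w)\le 2T$. This holds comfortably in the intended regime $M,K\ll T$ and is consistent with the scaling assumptions already in force; outside it the square-root bound is vacuous in any case. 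I would flag this mild condition rather than belabour it.
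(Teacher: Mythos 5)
Your proposal is correct and is exactly the derivation the paper intends, since the paper states that Corollary~\ref{corr:regret} ``follows immediately from Theorem~\ref{thm:bandit}'': you bound each $\ceil{b/\delta^{(i)}}$ by $\ceil{b/\delta}$ via monotonicity, substitute $\gamma=\sqrt{C/(2T)}$ into $\gamma T + 2C/\gamma$ to get $(5/\sqrt{2})\sqrt{CT}\le 4\sqrt{CT}$, and the arithmetic checks out. Your side remark about admissibility ($\gamma\in[0,1]$) is a fair point that the paper leaves implicit in Assumption~\ref{assu0}, but it does not affect the validity of the argument.
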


For any fixed $w$, the upper bound for the regret in equation~(\ref{eq:regret2}) is 
$$O(\sqrt{MKT\log T})=\tilde{O} (\sqrt{MKT}),$$
 where $\tilde{O}$ notation hides logarithmic factors. 
Compared with the lower bound in $\Omega(\sqrt{T})$ \cite{garivier2008upper}, our regret bound is asymptotically tight up to a logarithmic factor. 
In Section~\ref{sec:syn}, 
we  validate numerically that when scaled by $1/\sqrt{T}$, the scalings of Algorithm~\ref{alg:bandit}'s regret in $M$ and $K$ are roughly $O(\sqrt{M})$ and $O(\sqrt{K})$, as is suggested in Corollary~\ref{corr:regret}. We leave the derivation of the lower bound in $K$ and $M$ to future work.


Corollary~\ref{corr:regret} also sheds some  insights on how to choose tuning parameters $w$, $b$, and $\gamma$ in Algorithm~\ref{alg:bandit}. 
\begin{remark}[Algorithm Parameter Tuning]
\label{rem:w}
We now discuss how to choose algorithm parameters $w$, $b$, and $\gamma$ based on Corollary~\ref{corr:regret}.
In practice, we mainly care about large changes since small changes do not incur much regret. 
Assume an minimum change size $\tilde{\delta}>0$ then following from equation~\eqref{eqn:w_choice} and Corollary~\ref{corr:regret}, we can choose the window size $w \approx (4/\tilde{\delta}^2) \cdot[(\log(2KT^2))^{1/2}+(\log(2T))^{1/2}]^2$,
$b \approx [w\log(2KT^2)/2]^{1/2}$, and $\gamma \approx (\textstyle\sum_{i=1}^{M-1}K\cdot\min(w/2,\ceil{b/\tilde{\delta}}+3\sqrt{w})/(2T)^{1/2}$.
\end{remark}

The proof outline for Theorem~\ref{thm:bandit} is provided in section \ref{sec:proof_outline} and more details for technical lemmas are given in Appendix~\ref{sec:proofs}. The main steps of the proof are as follows. First, we rely on standard bandit analysis to decompose $\mathcal{R}(T)$ over a set of ``good" events and a set of ``bad" events: the good events include all the sample paths that Algorithm \ref{alg:bandit} reinitializes the UCB algorithm quickly after any change-point. The set of bad events includes all the sample paths that Algorithm \ref{alg:bandit} that either fails to reinitialize the UCB algorithm quickly when there is a change-point or incorrectly reinitializes the UCB algorithm when there is not any change-point. This enables us to couple the change-point detection analysis with bandit analysis, and we identify that the parameters specified in Theorem~\ref{thm:bandit} will ensure that the set of good events occurs with a high probability.

\subsection{Proof Outline of Theorem~\ref{thm:bandit}}
\label{sec:proof_outline}

In this subsection, we outline the proof for Thereom~\ref{thm:bandit}. Detailed proofs are provided in Appendix~\ref{sec:proofs}.
%
%

First, we bound the regret incurred by Algorithm \ref{alg:bandit} in the stationary scenario with $M=1$, $\nu_0 = 0$, and $\nu_1 = T$. 

\begin{lemma}[Regret bound for the M-UCB algorithm in stationary scenarios]
\label{lem:stationary_bandit}
Consider a stationary scenario with $M=1$, $\nu_0 = 0$, and $\nu_1 = T$. Under Algorithm \ref{alg:bandit} with parameter $w,b$ and $\gamma$, we have that
\begin{equation}
\mathcal{R}(T) \leq T\cdot \mathbb{P}(\tau_1 \leq T) + \tilde{C}+ \gamma T,
\label{eq:stationary_bandit}
\end{equation}
where $\tau_1$ is the first detection time and 
\[
\tilde{C} = 8\textstyle\sum_{\Delta_k^{(1)}>0} \frac{\log T}{\Delta_k^{(1)}} + \left(1+\frac{\pi^2}{3}+K\right) \sum_{k=1}^K \Delta_k^{(1)}.
\]

\end{lemma}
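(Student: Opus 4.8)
The plan is to begin from the definition~(\ref{regretdef}) of the cumulative regret and exploit that the scenario is stationary ($M=1$). Since $\mathbb{E}[X_{k,t}]=\mu_k^{1}$ for every $t$, the optimal arm is fixed across the horizon, and rewriting the regret as an expected sum of instantaneous suboptimality gaps gives
\begin{equation*}
\mathcal{R}(T) = \mathbb{E}\Big[\textstyle\sum_{t=1}^T \Delta_{A_t}^{(1)}\Big].
\end{equation*}
The central idea is to condition on whether a (necessarily spurious) detection ever fires before the horizon, i.e.\ to split on $\{\tau_1 \le T\}$ versus $\{\tau_1 > T\}$. Writing $\mathbb{I}\{\cdot\}$ for the indicator, I would decompose
\begin{equation*}
\mathcal{R}(T) = \mathbb{E}\Big[\textstyle\sum_{t=1}^T \Delta_{A_t}^{(1)}\,\mathbb{I}\{\tau_1 \le T\}\Big] + \mathbb{E}\Big[\textstyle\sum_{t=1}^T \Delta_{A_t}^{(1)}\,\mathbb{I}\{\tau_1 > T\}\Big].
\end{equation*}

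First I would dispose of the false-alarm term. Because all rewards, hence all gaps, lie in $[0,1]$, each instantaneous regret is at most $1$, so $\sum_{t=1}^T \Delta_{A_t}^{(1)} \le T$ on every sample path; taking expectations over $\{\tau_1 \le T\}$ yields the summand $T\cdot \mathbb{P}(\tau_1 \le T)$, which is exactly the price paid when the detector resets the learner although no change has occurred. On the complementary event $\{\tau_1 > T\}$ the detector never triggers, so $\tau$ stays at its initial value $0$ and no reinitialization happens. Algorithm~\ref{alg:bandit} then degenerates to a fixed interleaving of (i) forced uniform sampling on the steps with $(t \bmod \floor{K/\gamma}) \le K$ and (ii) a standard UCB1 rule run on all observations gathered so far, with confidence radius $\sqrt{2\log t / n_k}$. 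I would bound the two contributions separately: the forced-sampling steps occur roughly a $\gamma$-fraction of the time and each costs at most $1$ in gap, giving the $\gamma T$ term; the UCB1 steps I would control by the classical Auer--Cesa-Bianchi--Fischer argument, bounding $\mathbb{E}[N_k(T)]$ for each suboptimal arm by $8\log T/(\Delta_k^{(1)})^2 + 1 + \pi^2/3$ through a union bound over the confidence events, and then forming $\sum_k \Delta_k^{(1)}\,\mathbb{E}[N_k(T)]$ to recover $\tilde{C}$. The additional $K\sum_k \Delta_k^{(1)}$ inside $\tilde{C}$ arises from the bookkeeping needed to reconcile the per-arm count $n_k$ and the wall-clock index $t$ used in the confidence width with the forced-sampling pulls that are interleaved into the schedule.

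The main obstacle is this last step: making the UCB1 analysis rigorous in the modified schedule. Unlike vanilla UCB1, the per-arm count $n_k$ here is inflated by the forced uniform sampling, while the confidence width is calibrated to $t-\tau = t$ rather than to the number of UCB-driven pulls, so the usual concentration ``good event'' bounds must be re-derived to show that they still hold simultaneously over all admissible pairs $(t,n_k)$ with the required probability, with the leftover contributions collected into the additive $K\sum_k\Delta_k^{(1)}$. Once the per-arm pull counts are controlled on $\{\tau_1 > T\}$, assembling the three pieces $T\,\mathbb{P}(\tau_1 \le T)$, $\tilde{C}$, and $\gamma T$ delivers the claimed bound~(\ref{eq:stationary_bandit}).
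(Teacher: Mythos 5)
Your proposal is correct and follows essentially the same route as the paper's proof: the same decomposition on $\{\tau_1 \le T\}$ versus $\{\tau_1 > T\}$ with the trivial bound $T\cdot\mathbb{P}(\tau_1 \le T)$, and on the no-false-alarm event the same split of each arm's pull count into forced-uniform pulls (at most $\ceil{T\gamma/K}$ per arm, yielding $\gamma T$) and UCB-driven pulls bounded by the classical Auer et al.\ argument with $l=\ceil{8\log T/(\Delta_k^{(1)})^2}$, which produces $\tilde{C}$. The ``main obstacle'' you flag is in fact benign, and the paper treats it exactly as you suggest: on $\{\tau_1 > T\}$ one has $\tau = 0$ throughout, so the index width is $\sqrt{2\log t/n_k}$ precisely as in vanilla UCB1, inflated counts $n_k$ only help the union bound over admissible pairs, and the forced-sampling pulls enter purely additively, absorbed into the $\ceil{T\gamma/K}$ and $K\sum_k\Delta_k^{(1)}$ terms.
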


\begin{remark}
Lemma \ref{lem:stationary_bandit} shows that the regret for the M-UCB algorithm in the stationary scenario is incurred by three sources. The term $\mathbb{P}(\tau_1\leq T)$ on the right-hand side of (\ref{eq:stationary_bandit}) is the probability of raising one false alarm. This can be controlled to be small through setting appropriate algorithm parameters. The term $\tilde{C}$ is the classic regret bound for the UCB-based exploration in stationary scenarios. The term $\gamma T$ is incurred by the uniform sampling exploration. 
\end{remark}

Please refer to Appendix~\ref{sec:proof_stationary_bandit} for the proof of Lemma~\ref{lem:stationary_bandit}.
Next, we bound the probability of restarting Algorithm \ref{alg:bandit} 
when there is no change-point. This probability is equivalent to the probability of the CD algorithm raising a false alarm in the stationary scenario discussed above.

\begin{lemma}[Probability of raising false alarms in the stationary scenario]
\label{lem:falsealarm}
Consider a stationary scenario with $M=1$. Then under Algorithm \ref{alg:bandit} with parameter $w<T$, $b$ and $\gamma$, 
we have that
\[
\mathbb{P}(\tau_1\leq T) \leq wK\left(1-\left(1-2 \exp \left(-2b^2/w\right)\right)^{\floor{T/w}} \right),
\]
where $\tau_1$ is the first detection time. 

\end{lemma}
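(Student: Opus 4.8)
The plan is to reduce the event $\{\tau_1 \le T\}$ to a statement about clean i.i.d.\ reward sequences, bound the false-alarm probability of a single window test by Hoeffding, and then tame the many overlapping tests by splitting them into independent groups. First I would use that in the stationary scenario there are no change-points, so every detection is a false alarm, and that before the first detection time $\tau_1$ the algorithm never resets. Hence, up to $\tau_1$, the observations $Z_{k,1}, Z_{k,2}, \ldots$ drawn from each arm $k$ form an i.i.d.\ sequence with common mean $\mu_k^1$ and support in $[0,1]$, and the event $\{\tau_1 \le T\}$ is contained in the event that some sliding-window test $\mathrm{CD}(w,b,Z_{k,j-w+1},\ldots,Z_{k,j})$ returns \textbf{True} for some arm $k$ and some window ending at an index $j$ with $w \le j \le N_k \le T$, where $N_k$ is the number of pulls of arm $k$.

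Next I would bound the per-test false-alarm probability. Writing $S = \sum_{i=w/2+1}^{w} Y_i - \sum_{i=1}^{w/2} Y_i$ for $w$ i.i.d.\ observations $Y_i \in [0,1]$ with a common mean, we have $\mathbb{E}[S]=0$, and since $S$ is a sum of $w$ independent terms each ranging over an interval of length one, the two-sided Hoeffding inequality gives $\mathbb{P}(|S|>b) \le 2\exp(-2b^2/w)$. I will abbreviate this single-test bound as $p = 2\exp(-2b^2/w)$, which is exactly the quantity appearing inside the claimed expression.

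The main step is to handle the dependence among overlapping windows together with the fact that the number and location of tests are random, since they depend on how often the UCB/uniform-sampling policy selects each arm. Fixing an arm $k$, I would group its window tests by the residue of the ending index $j$ modulo $w$: within each of the $w$ residue classes the windows are pairwise \emph{disjoint}, so by independence of the $Z_{k,\cdot}$ the corresponding test outcomes are mutually independent, and at most $\floor{N_k/w} \le \floor{T/w}$ of them fit inside the first $T$ observations. Consequently the probability that some test in a fixed residue class fires is at most $1-(1-p)^{\floor{T/w}}$. A union bound over the $w$ residue classes bounds the probability that arm $k$ ever raises a false alarm by $w\bigl(1-(1-p)^{\floor{T/w}}\bigr)$, and a further union bound over the $K$ arms yields $\mathbb{P}(\tau_1 \le T) \le wK\bigl(1-(1-p)^{\floor{T/w}}\bigr)$, which is the claim.

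I expect this residue-class decomposition to be the crux of the argument and the only genuinely delicate point: overlapping sliding windows are dependent, and the test locations are data-dependent random quantities, so a naive treatment either loses the product form $(1-p)^{\floor{T/w}}$ or entangles the estimate with the random pull counts $N_k$. Splitting by offset both restores exact independence within a class (producing the product) and lets me upper bound the number of tests per class by $\floor{T/w}$ uniformly in $N_k$, thereby decoupling the bound from the policy's selection behaviour.
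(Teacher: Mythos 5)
Your proposal is correct and matches the paper's own proof in all essentials: the union bound over the $K$ arms, the single-window bound $p \le 2\exp(-2b^2/w)$ via Hoeffding/McDiarmid, and the crucial decomposition of the sliding-window tests into $w$ residue classes modulo $w$, within which disjoint windows are independent and yield the product form $\bigl(1-p\bigr)^{\floor{T/w}}$ (the paper phrases this same step via geometric stopping times $\tau_{k,1}^{(j)}$). Your explicit remark that the random pull counts $N_k$ are uniformly dominated by $T$ is a nice clarification of a point the paper treats implicitly, but it is not a different argument.
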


\begin{remark}
Using the fact that $(1-x)^a >1-ax$ for any $a>1$ and $0<x<1$, we have that in Lemma \ref{lem:falsealarm} 
$
\mathbb{P}(\tau_1\leq T) \leq 2KT\exp(-2b^2/w). 
$
Therefore, setting $b=[w\log(2KT^2)/2]^{1/2}$ we have that $\mathbb{P}(\tau_1\leq T) \leq 1/T$, which means that it is expected to raise at most only one false alarm in a stationary scenario with $T$ time steps. 
\end{remark}

Please refer to Appendix~\ref{sec:proof_falsealarm} for the proof of Lemma~\ref{lem:falsealarm}.
Then, we establish a lower bound on the probability that the CD algorithm (Algorithm \ref{alg:cd}) achieves a successful detection in scenarios with one change-point, i.e. $M=2$.


\begin{lemma}[Probability of achieving a successful detection with $M=2$]
\label{lem:detection}
Consider a piecewise-stationary scenario with $M=2$, and recall that $L=w\ceil{K/\gamma}$. Assume that $\nu_2-\nu_1>L/2$. For any $\mu^1, \mu^2 \in [0,1]^K$ satisfying 
$$\delta_{\tilde{k}}^{(1)} \geq 2b/w+c$$
 for some $\tilde{k}\in \mathcal{K}$ and $c>0$, under Algorithm~\ref{alg:bandit}, we have that 
\[
\mathbb{P}(\nu_1<\tau_1\leq \nu_1+L/2 \mid \tau_1 > \nu_1 ) \geq 1-2\exp\left(-wc^2/4\right).
\]

\end{lemma}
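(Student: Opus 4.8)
The plan is to show that, conditioned on the absence of a false alarm before the change-point (the event $\tau_1 > \nu_1$), the test of Algorithm~\ref{alg:cd} fires on a single, carefully chosen window of arm $\tilde{k}$'s observations within $L/2$ steps of $\nu_1$, and then to lower bound the probability of that one event. First I would record two structural consequences of Algorithm~\ref{alg:bandit}. Because the uniform-sampling step forces arm $\tilde{k}$ to be played at least once in every $\floor{K/\gamma}$ consecutive steps, within any window of length $L/2 = \tfrac{w}{2}\ceil{K/\gamma} \ge \tfrac{w}{2}\floor{K/\gamma}$ after $\nu_1$ the arm collects at least $w/2$ fresh observations; the hypothesis $\nu_2 - \nu_1 > L/2$ then guarantees that all of these samples lie inside the second segment, so they are genuinely post-change. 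Moreover, on the event $\tau_1 > \nu_1$ the counters $n_k$ are never reset before $\nu_1$, so the indexing of arm $\tilde{k}$'s observations is intact up to the change.

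Next I would single out the \emph{straddling window} consisting of the last $w/2$ pre-change observations followed by the first $w/2$ post-change observations of arm $\tilde{k}$. This window is evaluated by Algorithm~\ref{alg:bandit} at the moment the $(w/2)$-th post-change sample of $\tilde{k}$ is gathered, which by the previous paragraph occurs at some time in $(\nu_1,\nu_1+L/2]$. The crucial observation is that a detection \emph{failure} on the target event, namely $\tau_1 > \nu_1 + L/2$, implies in particular that no detection (hence no reset) occurred between $\nu_1$ and this evaluation time, so the window is exactly as described and Algorithm~\ref{alg:cd} must have returned False on it. Consequently
\[
\mathbb{P}(\tau_1 > \nu_1 + L/2 \mid \tau_1 > \nu_1) \le \mathbb{P}(\text{CD returns False on the straddling window} \mid \tau_1 > \nu_1).
\]
Any earlier post-change detection only helps, since it still yields $\nu_1 < \tau_1 \le \nu_1 + L/2$, so it suffices to analyze this single window.

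Finally I would bound the right-hand side by concentration. Writing $S_1$ and $S_2$ for the sums over the first ($w/2$ pre-change) and second ($w/2$ post-change) halves of the window, the statistic $S_2 - S_1$ satisfies $|\mathbb{E}[S_2 - S_1]| = \tfrac{w}{2}\delta_{\tilde{k}}^{(1)} \ge b + \tfrac{wc}{2}$ by the assumption $\delta_{\tilde{k}}^{(1)} \ge 2b/w + c$. Hence the failure event $|S_2 - S_1| \le b$ forces a total deviation of at least $\tfrac{wc}{2}$ from the mean, which in turn forces a deviation of at least $\tfrac{wc}{4}$ in at least one of the two half-sums. Each half-sum is a sum of $w/2$ independent $[0,1]$-valued rewards, so Hoeffding's inequality bounds each one-sided deviation by $\exp(-wc^2/4)$, and a union bound over the two halves gives the claimed $2\exp(-wc^2/4)$.

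I expect the main obstacle to be the coupling in the second paragraph: making rigorous that conditioning on $\tau_1 > \nu_1$ together with ``no reset before the evaluation time'' leaves the straddling window's rewards distributed exactly as independent pre- and post-change draws, so that the independence required for Hoeffding is legitimate. A secondary subtlety is the boundary case in which arm $\tilde{k}$ has fewer than $w/2$ pre-change observations, where the split point cannot be placed exactly at $\nu_1$; there I would either invoke a shifted window and re-examine the resulting mean gap, or argue that the conditioning event effectively supplies enough pre-change data for the tested window.
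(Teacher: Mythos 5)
Your proposal is correct and takes essentially the same route as the paper: the paper's proof likewise uses the uniform-sampling guarantee to locate, within $L/2$ steps of $\nu_1$, the window consisting of the last $w/2$ pre-change and first $w/2$ post-change samples of arm $\tilde{k}$, and then bounds the failure probability by concentration --- it applies McDiarmid's inequality directly to the two-sample statistic $S_{\tilde{k},w}$, whereas your split into two half-sums with Hoeffding plus a union bound yields the identical $1-2\exp(-wc^2/4)$. The conditioning subtlety you flag is genuine but is passed over silently in the paper's one-line chain $\mathbb{P}(\nu_1<\tau_1\leq \nu_1+L/2 \mid \tau_1>\nu_1)\geq \mathbb{P}(S_{\tilde{k},w}>b)$, so your write-up is, if anything, the more careful of the two.
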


\begin{remark}
Setting $b=\sqrt{w\log(2KT^2)/2}$ and $c= 2\sqrt{\log(2T)/w}$ in Lemma \ref{lem:detection}, we have that with probability at least $1-1/T$, a change can be detected in $L/2$ steps after the change occurs , provided that $\delta_{\tilde{k}}^{(1)}$ is greater than $C_1/\sqrt{w}$ for some constant $C_1$ that only depends on $T$ and $K$. This shows that we can set a larger $w$ to achieve a successful detection of a smaller change. 
\end{remark}

Please refer to Appendix~\ref{sec:proof_detection} for the proof of Lemma \ref{lem:detection}. Lemma \ref{lem:detection} relates the tuning parameter $b$ and $w$ to the smallest change that the CD algorithm can successfully detect with high probability. 

In the next lemma, for scenarios with $M=2$, we bound the expected detection delay (EDD) by a function of the change amplitude, given that the change can be detected successfully. In other words, Lemma~\ref{lem:detection} characterizes a lower bound on change amplitude to ensure that the detection delay is no more than $L/2$ with high probability. When the change amplitude is not so small, the EDD can be smaller than $L/2$, as presented in the following lemma. 

\begin{lemma}[Expected detection delay]
\label{lem:DD}
Consider a piecewise-stationary scenario with $M=2$, and recall that $L=w\ceil{K/\gamma}$. 
Assume that $\nu_2-\nu_1>L/2$. For any $\mu^1, \mu^2 \in [0,1]^K$ satisfying $\delta_{\tilde{k}}^{(1)} > 2b/w+c$ for some $\tilde{k}\in \mathcal{K}$, we have that 
\begin{equation*}
\begin{split}
&\mathbb{E}[\tau_1 - \nu_1 \mid \nu_1<\tau_1\leq \nu_1+L/2 ] \\
& \leq \frac{\min(L/2, \ceil{b/\delta_{\tilde{k}}^{(1)}} + 3\sqrt{w} \cdot \ceil{K/\gamma})}{1-2\exp\left(-wc^2/4\right)}.
\end{split}\nonumber
\end{equation*}
\end{lemma}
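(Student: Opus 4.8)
The plan is to turn the conditional expectation into a ratio and dispatch the denominator using Lemma~\ref{lem:detection}. Writing $A=\{\nu_1<\tau_1\le \nu_1+L/2\}$ and noting $A\subseteq\{\tau_1>\nu_1\}$, I would start from
\[
\mathbb{E}[\tau_1-\nu_1\mid A]=\frac{\mathbb{E}\!\left[(\tau_1-\nu_1)\,\mathbb{I}(A)\mid \tau_1>\nu_1\right]}{\mathbb{P}(A\mid \tau_1>\nu_1)},
\]
where the factor $\mathbb{P}(\tau_1>\nu_1)$ cancels between numerator and denominator. By Lemma~\ref{lem:detection} the denominator is at least $1-2\exp(-wc^2/4)$, which is exactly the denominator in the claimed bound, so it remains to show that the numerator is at most $\min(L/2,\ \ceil{b/\delta_{\tilde k}^{(1)}}+3\sqrt{w}\,\ceil{K/\gamma})$. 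The first branch of the minimum is immediate, since on $A$ one has $\tau_1-\nu_1\le L/2$ and $\mathbb{I}(A)\le 1$, so the numerator is at most $L/2$.

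For the second branch I would track the post-change observations of the high-amplitude arm $\tilde k$ and convert an observation-count bound into a time bound. Conditioning on $\tau_1>\nu_1$ means the detector has not reset during the first segment; because arm $\tilde k$ is drawn at least once in every block of $\ceil{K/\gamma}$ steps by the uniform-sampling rule (line~3 of Algorithm~\ref{alg:bandit}) and, under Assumption~\ref{assu0}(a), the first segment is long, arm $\tilde k$ has at least $w$ observations by time $\nu_1$, so the sliding window fed to Algorithm~\ref{alg:cd} can be positioned with its entire first half pre-change. Let $S_j$ denote the two-sample statistic $\sum_{i=w/2+1}^{w}Y_i-\sum_{i=1}^{w/2}Y_i$ evaluated on arm $\tilde k$'s window immediately after its $j$-th post-change observation, for $1\le j\le w/2$. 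A direct mean computation gives $\mathbb{E}[S_j]=\pm j\,\delta_{\tilde k}^{(1)}$, so the expected statistic first clears the threshold $b$ at $j\approx \ceil{b/\delta_{\tilde k}^{(1)}}$; the extra $3\sqrt{w}$ observations absorb the fluctuations of $S_j$, which is a difference of sums of $w$ independent $[0,1]$ variables and hence obeys $\mathbb{P}(|S_j-\mathbb{E}S_j|\ge t)\le 2\exp(-2t^2/w)$.

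Concretely, I would let $N^\ast$ be the number of post-change observations of arm $\tilde k$ at the detection time, so that on $A$ one has $\tau_1-\nu_1\le N^\ast\,\ceil{K/\gamma}$, and bound $\mathbb{E}[N^\ast\,\mathbb{I}(A)\mid \tau_1>\nu_1]$ by a tail sum $\sum_{j\ge 0}\mathbb{P}(N^\ast>j)$. Each non-detection probability $\mathbb{P}(N^\ast>j)$ is controlled by the single-window event $\{S_j\le b\}$; since $\mathbb{E}[S_j]=j\,\delta_{\tilde k}^{(1)}$ is increasing on $\{1,\dots,w/2\}$, one-sided Hoeffding yields $\mathbb{P}(S_j\le b)\le \exp(-2(j\delta_{\tilde k}^{(1)}-b)^2/w)$ for $j>b/\delta_{\tilde k}^{(1)}$, and summing this geometric-type tail contributes the $O(\sqrt{w})$ correction captured by $3\sqrt{w}$. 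Multiplying the resulting observation-count bound by $\ceil{K/\gamma}$ and combining with the trivial cap $L/2$ gives the numerator bound, and dividing by the Lemma~\ref{lem:detection} success probability completes the argument.

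The main obstacle is the dependence among the overlapping windowed statistics $\{S_j\}_{j}$: the first-passage time of $|S_j|$ over $b$ is not built from independent increments, so $N^\ast$ cannot be treated as a clean renewal variable. The way around this is to never analyse the whole trajectory at once but to bound each event $\{N^\ast>j\}$ by the marginal event $\{S_j\le b\}$, which is legitimate only while $\mathbb{E}[S_j]$ is increasing, i.e.\ for $j\le w/2$ (for $j>w/2$ the mean $(w-j)\delta_{\tilde k}^{(1)}$ decreases); this is exactly where the assumption $\delta_{\tilde k}^{(1)}>2b/w+c$, hence $\ceil{b/\delta_{\tilde k}^{(1)}}<w/2$, guarantees the threshold is crossed inside the favourable regime. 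The remaining effort—tracking the Hoeffding constants so the fluctuation tail collapses to $3\sqrt{w}$ and verifying the observation-to-time conversion—is routine but must be matched carefully against the stated constants.
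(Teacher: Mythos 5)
Your proposal is correct and follows essentially the same route as the paper's proof: a tail-sum representation of the conditional expectation, the denominator $1-2\exp(-wc^2/4)$ supplied by Lemma~\ref{lem:detection}, a split at roughly $\ceil{b/\delta_{\tilde k}^{(1)}}$ post-change observations, per-step marginal Hoeffding/McDiarmid bounds on single-window events (exactly the paper's implicit fix for the overlapping-statistics dependence you flag), and a Gaussian-type tail summation producing the $3\sqrt{w}$ term, capped by the trivial $L/2$ bound. The only cosmetic difference is that you index the tail sum by arm $\tilde k$'s observation count and convert to time by multiplying by $\ceil{K/\gamma}$, whereas the paper indexes by time and divides by $\ceil{K/\gamma}$ to count observations.
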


Please refer to Appendix~\ref{sec:proof_dd} for the proof of Lemma~\ref{lem:DD}.

Theorem~\ref{thm:bandit} can be proved based on the above four lemmas and properties of renewal processes. Specifically, we decompose $\mathcal{R}(T)$ over a set of ``good" events and a set of ``bad" events: the good events include all the sample paths that Algorithm \ref{alg:bandit} reinitializes the UCB algorithm correctly and quickly after all change-points. The set of bad events includes all the sample paths that Algorithm \ref{alg:bandit} that either fails to reinitialize the UCB algorithm quickly when there is a change-point (large detection delay) or incorrectly reinitializes the UCB algorithm when there is not any change-point (false alarm). Lemma~\ref{lem:falsealarm} and~\ref{lem:detection} can be used to upper bound the probabilities of the bad events. Together with the naive bound $\mathcal{R}(T) \leq T$, we can bound the regret in the bad events. On the other hand, Lemma~\ref{lem:stationary_bandit} and~\ref{lem:DD} can be used to upper bound the regret in the good events.
Please refer to Appendix~\ref{sec:proof_main} for the detailed proof for Theorem~\ref{thm:bandit}.

\vspace{-0.1in}
\section{Experiments}
\label{sec:experiment}

In this section, we present some numerical experiments  to validate the performance of M-UCB. We first verify the scalings of  M-UCB's regret in $M$ and $K$ and then compare M-UCB with state-of-the-art algorithms on
a publicly available benchmark Yahoo! dataset.

\subsection{Regret Scalings in $M$ and $K$}
\label{sec:syn}


To eliminate the scaling issue caused by different $T$'s, in this subsection we scale the empirical regret by $1/\sqrt{T}$. For the illustrative purposes, we assume the rewards are Bernoulli distributed. 

We first show the regret scaling in $M$. 
We fix $K=10$, and let
the locations of change-points to be evenly spaced with interval of length $20000$ so for any $M$ we have $T=20000\cdot M$. For the reward sequence of each arm, we set $\mu^{(i)} = \mu$ when $i$ is odd and $\mu^{(i)}=1-\mu$ when $i$ is even, where $\mu \in [0,1]^K$ is randomly chosen such that the difference between the largest and smallest entry is larger than $0.6$.
Consider an upper bound $20000\times 25$ for $T$, an upper bound $10$ for $K$ and a lower bound $0.6$ for $\delta$. Based on Remark \ref{rem:w}, we can set $w=800$ and  $b=\sqrt{(w/2)\cdot \log(2KT^2)}$ and $\gamma = \sqrt{(M-1)K\cdot (2b+3\sqrt{w})/(2T)}$. 
For each $M$, we randomly generate $100$ instances, and for each instance, we run the M-UCB algorithm for $50$ times. We generate the averaged regret by averaging over these $5000$ simulations. 
The results are shown in Figure \ref{fig:R_versus_M}, which can be fitted using the simple model $y=c+ax^b$ to obtain an estimated order $b=0.55$,
which means that the regret is roughly on the order of $O(\sqrt{M})$.

\begin{figure}[h]
  \centering
    \includegraphics[width=0.7\linewidth]{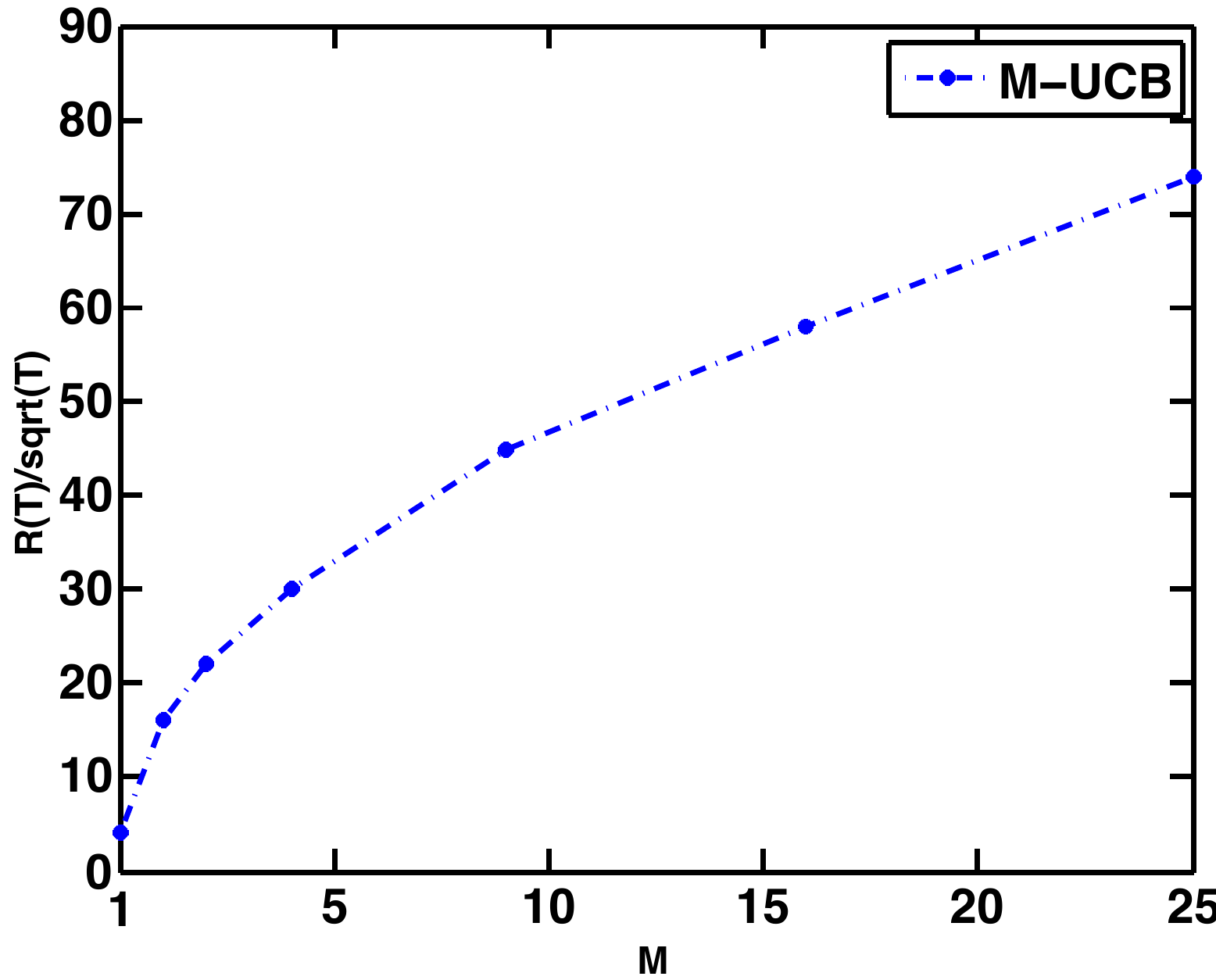}
  \caption{Cumulative regret of M-UCB up to time $T$ scaled by $\sqrt{T}$ versus $M$.}
\label{fig:R_versus_M}
\end{figure}

We then demonstrate the regret scaling in $K$. We fix $M=4$, $T=3\times 10^5$ and let the change-points to be evenly spaced. For each $i=1,\ldots,4$, we randomly generate $\mu^{(i)}\in [0,1]^K$ such that the difference between the largest and smallest entry is larger than $0.6$ and one combination of $K$ and $(\mu^{(i)})_{i=1}^4$ forms one instance.  We set the same algorithmic parameters as those in the first simulation example. We then  generate $100$ random instances, and for each instance we repeat our algorithm for $50$ times to obtain the averaged regret. 
The results are shown in Figure \ref{fig:R_versus_K}, which again can be fitted with the simple model $y=c+ax^b$ to obtain an estimated order $b=0.53$,
which means that the regret grows roughly at a rate of $O(\sqrt{K})$. 

\begin{figure}[h]
  \centering
    \includegraphics[width=0.7\linewidth]{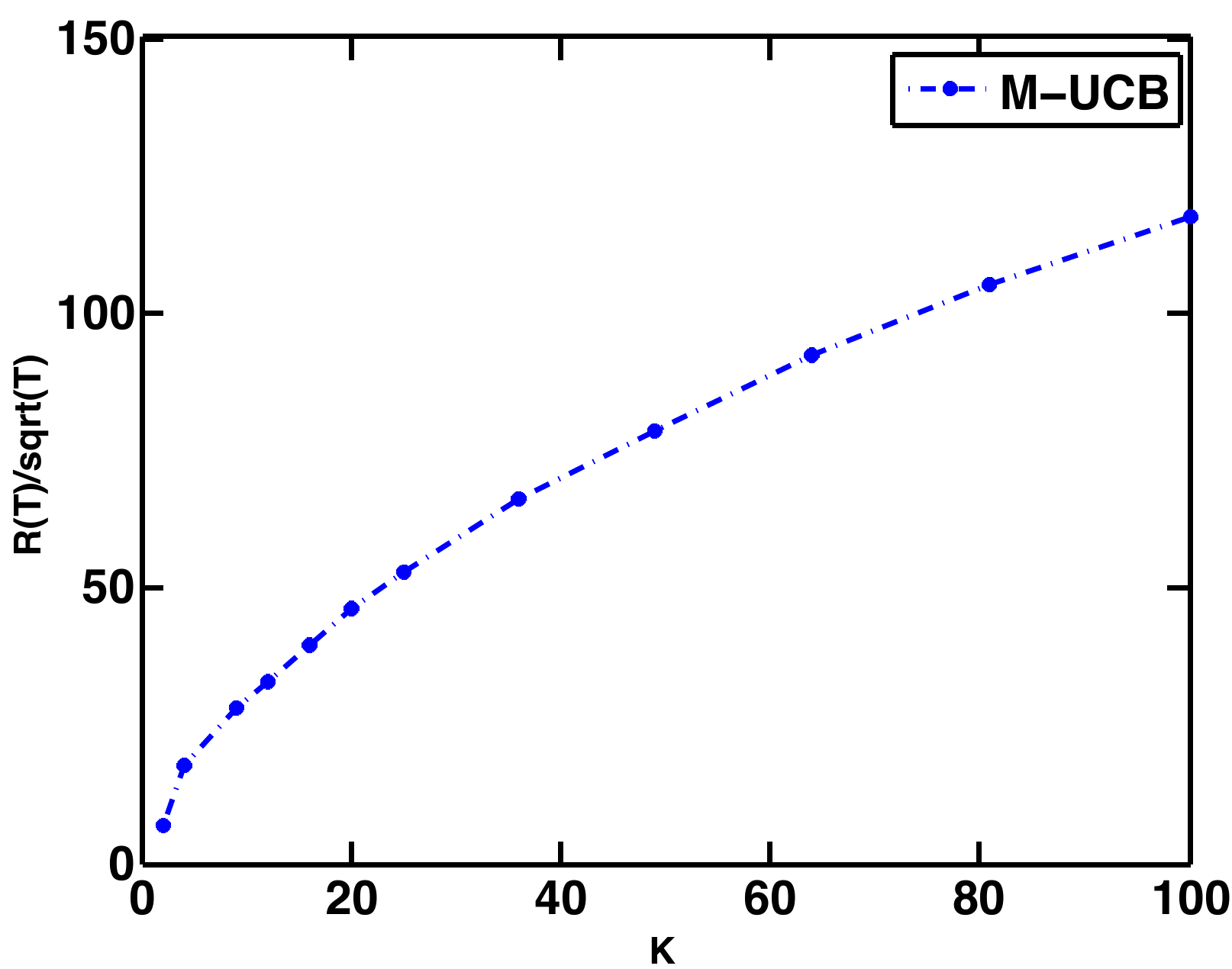}
  \caption{Cumulative regret of M-UCB up to time $T$ scaled by $\sqrt{T}$ versus $K$.}
\label{fig:R_versus_K}
\end{figure}

The above results suggest that the regret bound in Corollary~\ref{corr:regret} is roughly tight in $M$ and $K$ for M-UCB algorithm.



%

\subsection{Experiment on Yahoo! Dataset}
\label{sec:yahoo}

We compare the expected cumulative regret of different algorithms using the benchmark dataset publicly published by Yahoo!\footnote{Yahoo! Front Page Today Module User Click Log Dataset on https://webscope.sandbox.yahoo.com}. This dataset
provides a binary value for each arrival to represent whether the user clicks the specified article \cite{chu2009case, li2011unbiased}. We use one arm to represent one article and assume a Bernoulli reward (one if the user clicks the article and zero otherwise). The goal is set to maximize the expected number of clicked articles using strategies that select one article for each arrival sequentially. We randomly select six different articles of which the click-through rates are greater than zero within one five-day horizon, where the click-through rates are computed by taking the mean of the number of times each article being clicked every $43200$ seconds (which corresponds to a half day). If the difference between the estimated click-through rate of the current half day and that of the last half day is less than $0.01$, we set the click-through rate of the current half day as that of the last half day. In this way, we obtain a piecewise-stationary scenario with $T=43200\times 10 = 4.32\times 10^5$, $K=6$ and $M=9$, as shown in Figure \ref{fig:Yahoo_Arm}. 

\begin{figure}[h]
\centering
\begin{minipage}{.4\textwidth}
  \centering
  \includegraphics[width=.8\linewidth]{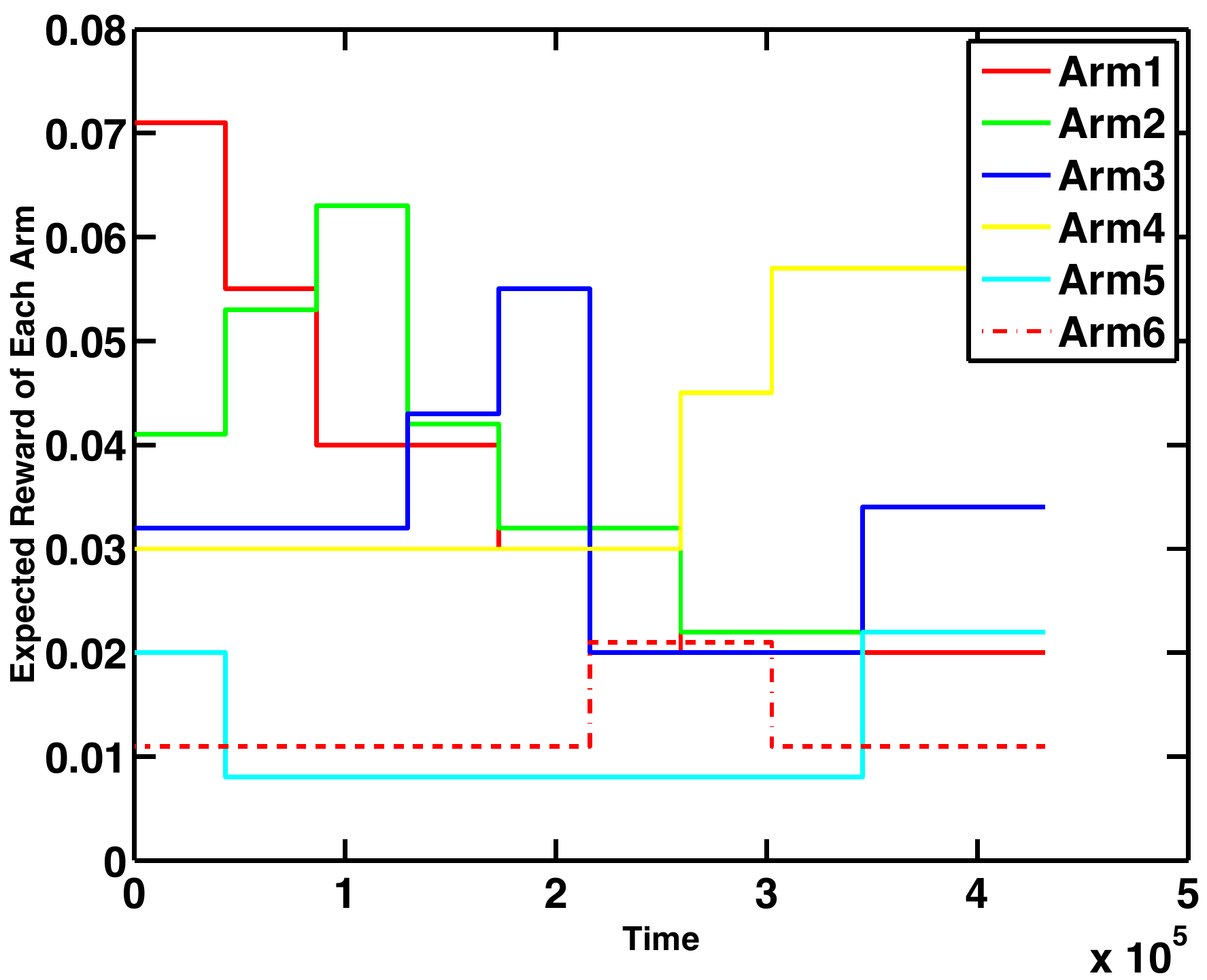}
  \captionof{figure}{Click-through rates computed from Yahoo! dataset with $T=4.32\times 10^5, K=6$ and $M=9$.}
  \label{fig:Yahoo_Arm}
\end{minipage}%
\hspace{0.47in}
\begin{minipage}{.42\textwidth}
  \centering
  \includegraphics[width=0.8\linewidth]{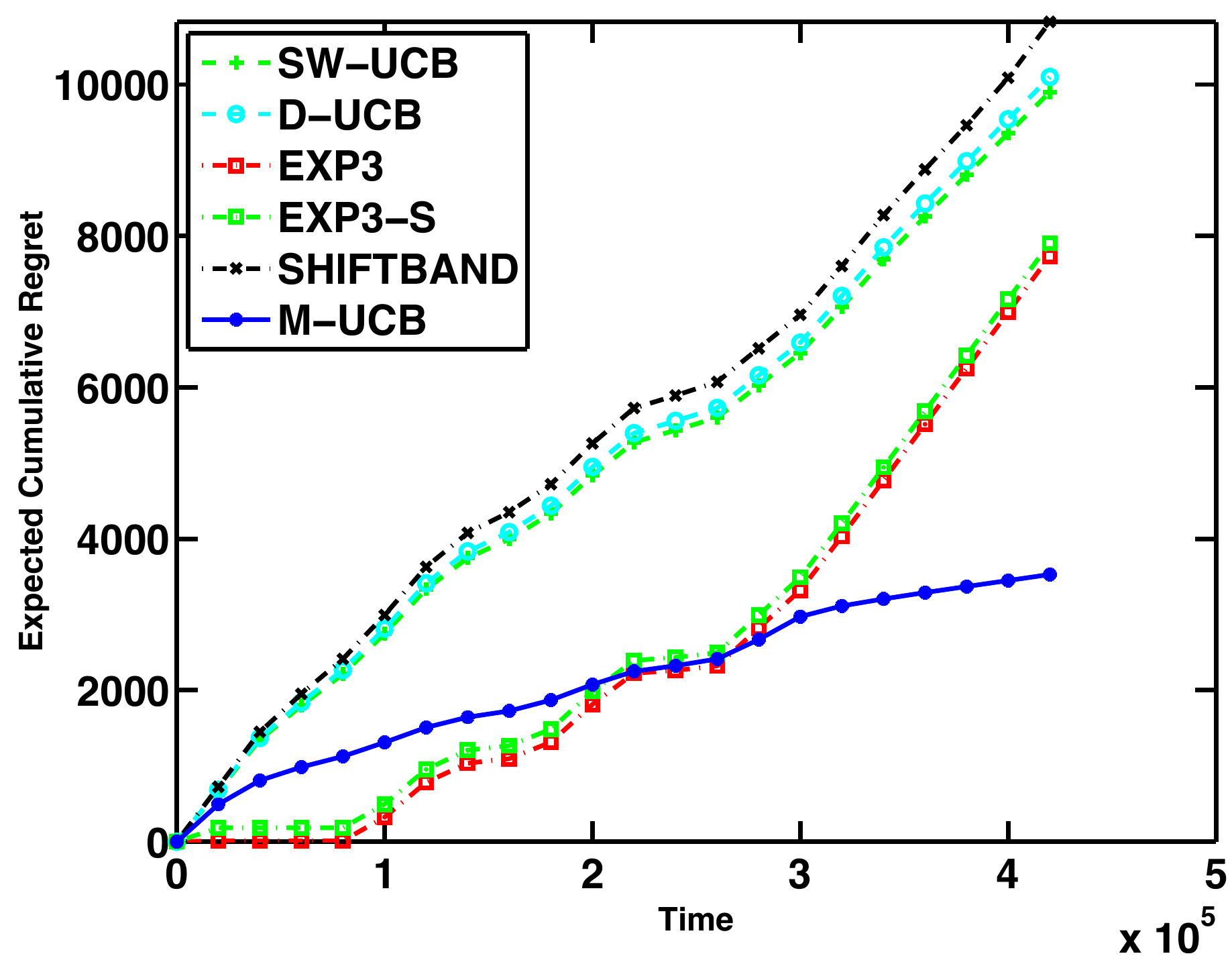}
  \captionof{figure}{Expected cumulative regrets for different algorithms under the piecewise-stationary scenario shown in Figure \ref{fig:Yahoo_Arm}.}
  \label{fig:Yahoo_Regret}
\end{minipage}
\end{figure}


Along with our algorithm using the same $w, b$ and $\gamma$ as those in subsection \ref{sec:syn}, we run five other algorithms, Discounted UCB (D-UCB), Sliding-Window UCB (SW-UCB), EXP3, EXP3.S and SHIFTBAND for comparison. Based on the theoretical results in \cite{garivier2008upper}, we choose $\gamma = 1-0.25\sqrt{(M-1)/T}$ and $\xi = 0.5$ for D-UCB and choose $\tau = 2\sqrt{T\log T/(M-1)}$ in SW-UCB. Based on Theorem $1$ in \cite{auer2002using}, we choose $\delta = 0.05$, $\alpha = 2\sqrt{\log(T^3K/\delta)}$, $\beta = 1/T$ and $\eta = \sqrt{\log(TK)M/(TK)}$ for the SHIFTBAND algorithm. For EXP3 and EXP3.S algorithms we select parameters to be the same as those in \cite{auer2002nonstochastic}. \footnote{Specifically, The parameters for EXP3 and EXP3.S are selected based on Corollary 3.2 and 8.2 in \cite{auer2002nonstochastic}.} The expected cumulative regret is computed by taking the average of the regrets for $100$ independent Monte Carlo trials, as shown in Figure \ref{fig:Yahoo_Regret}.


The results show that the M-UCB algorithm achieves a better performance than other algorithms even if all the algorithms seem to have a sub-linear regret. Compared with EXP3 and EXP3.S, M-UCB achieves a $50\%$ reduction of the cumulative regret and this number is $60\%$ if we make comparisons with SW-UCB, D-UCB and SHIFTBAND algorithms. 

It is worth clarifying that our experiment on the Yahoo dataset does not satisfy Assumption \ref{assu0} in the sense that it contains many small-magnitude changes. %
Thus, we believe it is a fair comparison for all algorithms. 
Specifically, in this case $K=6$ and $T=432000$, and we choose $w=800$ for M-UCB. Thus, Assumption~\ref{assu0} requires that all changes have magnitude no less than $0.64$. However, as is shown in Figure~\ref{fig:Yahoo_Arm}, all changes have magnitude less than $0.1$. This experiment shows that M-UCB works well and outperforms state-of-the-art baselines even if Assumption \ref{assu0} does not hold.

\section{Conclusion}
\label{sec:conclusion}

In this paper, we have developed a so-called M-UCB algorithm (Algorithm~\ref{alg:bandit}) for piecewise-stationary bandits with bounded rewards. M-UCB combines the UCB (with uniform exploration) with a simple change-point detection component based on running sample means over a sliding window. We prove that M-UCB algorithm achieves a nearly optimal regret bound on the order of $O(\sqrt{MKT\log T})$ under mild technical conditions. Our experiment results also show that it can achieve significant regret reduction with respect to the state-of-the-art algorithms in numerical experiments based on real-world datasets.

Our proposed M-UCB algorithm is based on the classical UCB1 algorithm. We may improve by considering other exploration schemes (e.g. KL-UCB, Thompson sampling) in the current setup. One can foresee that as long as the exploration schemes are statistically efficient, then a variant of our analysis will carry through.

%

\clearpage


%

\newpage
\setstretch{1}

\begingroup
\renewcommand{\section}[2]{}%
\bibliography{bandit_changepoint}

\begin{thebibliography}{}

\bibitem[Alaya-Feki et~al., 2008]{alaya2008dynamic}
Alaya-Feki, A. B.~H., Moulines, E., and LeCornec, A. (2008).
\newblock Dynamic spectrum access with non-stationary multi-armed bandit.
\newblock In {\em Signal Processing Advances in Wireless Communications, 2008.
  SPAWC 2008. IEEE 9th Workshop on}, pages 416--420. IEEE.

\bibitem[Allesiardo and F{\'e}raud, 2015]{allesiardo2015exp3}
Allesiardo, R. and F{\'e}raud, R. (2015).
\newblock Exp3 with drift detection for the switching bandit problem.
\newblock In {\em Data Science and Advanced Analytics (DSAA), 2015. 36678 2015.
  IEEE International Conference on}, pages 1--7. IEEE.

\bibitem[Auer, 2002]{auer2002using}
Auer, P. (2002).
\newblock Using confidence bounds for exploitation-exploration trade-offs.
\newblock {\em Journal of Machine Learning Research}, 3(Nov):397--422.

\bibitem[Auer et~al., 2002a]{auer2002finite}
Auer, P., Cesa-Bianchi, N., and Fischer, P. (2002a).
\newblock Finite-time analysis of the multiarmed bandit problem.
\newblock {\em Machine learning}, 47(2-3):235--256.

\bibitem[Auer et~al., 2002b]{auer2002nonstochastic}
Auer, P., Cesa-Bianchi, N., Freund, Y., and Schapire, R.~E. (2002b).
\newblock The nonstochastic multiarmed bandit problem.
\newblock {\em SIAM journal on computing}, 32(1):48--77.

\bibitem[Basseville et~al., 1993]{basseville1993detection}
Basseville, M., Nikiforov, I.~V., et~al. (1993).
\newblock {\em Detection of abrupt changes: theory and application}, volume
  104.
\newblock Prentice Hall Englewood Cliffs.

\bibitem[Besbes et~al., 2014]{besbes2014stochastic}
Besbes, O., Gur, Y., and Zeevi, A. (2014).
\newblock Stochastic multi-armed-bandit problem with non-stationary rewards.
\newblock In {\em Advances in neural information processing systems}, pages
  199--207.

\bibitem[Bouneffouf et~al., 2012]{bouneffouf2012contextual}
Bouneffouf, D., Bouzeghoub, A., and Gan{\c{c}}arski, A.~L. (2012).
\newblock A contextual-bandit algorithm for mobile context-aware recommender
  system.
\newblock In {\em International Conference on Neural Information Processing},
  pages 324--331. Springer.

\bibitem[Cesa-Bianchi and Lugosi, 2006]{cesa2006prediction}
Cesa-Bianchi, N. and Lugosi, G. (2006).
\newblock {\em Prediction, learning, and games}.
\newblock Cambridge university press.

\bibitem[Chu et~al., 2009]{chu2009case}
Chu, W., Park, S.-T., Beaupre, T., Motgi, N., Phadke, A., Chakraborty, S., and
  Zachariah, J. (2009).
\newblock A case study of behavior-driven conjoint analysis on yahoo!: front
  page today module.
\newblock In {\em Proceedings of the 15th ACM SIGKDD international conference
  on Knowledge discovery and data mining}, pages 1097--1104. ACM.

\bibitem[Garivier and Moulines, 2008]{garivier2008upper}
Garivier, A. and Moulines, E. (2008).
\newblock On upper-confidence bound policies for non-stationary bandit
  problems.
\newblock {\em arXiv preprint arXiv:0805.3415}.

\bibitem[Garivier and Moulines, 2011]{garivier2011upper}
Garivier, A. and Moulines, E. (2011).
\newblock On upper-confidence bound policies for switching bandit problems.
\newblock In {\em International Conference on Algorithmic Learning Theory},
  pages 174--188. Springer.

\bibitem[Girgin et~al., 2012]{girgin2012managing}
Girgin, S., Mary, J., Preux, P., and Nicol, O. (2012).
\newblock Managing advertising campaigns?an approximate planning approach.
\newblock {\em Frontiers of Computer Science}, 6(2):209--229.

\bibitem[Hartland et~al., 2007]{hartland2007change}
Hartland, C., Baskiotis, N., Gelly, S., Sebag, M., and Teytaud, O. (2007).
\newblock Change point detection and meta-bandits for online learning in
  dynamic environments.
\newblock {\em CAp}, pages 237--250.

\bibitem[Herbster and Warmuth, 1998]{herbster1998tracking}
Herbster, M. and Warmuth, M.~K. (1998).
\newblock Tracking the best expert.
\newblock {\em Machine learning}, 32(2):151--178.

\bibitem[Kocsis and Szepesv{\'a}ri, 2006]{kocsis2006discounted}
Kocsis, L. and Szepesv{\'a}ri, C. (2006).
\newblock Discounted ucb.
\newblock In {\em 2nd PASCAL Challenges Workshop}, pages 784--791.

\bibitem[Kveton et~al., 2014]{kveton2014matroid}
Kveton, B., Wen, Z., Ashkan, A., Eydgahi, H., and Eriksson, B. (2014).
\newblock Matroid bandits: Fast combinatorial optimization with learning.
\newblock {\em arXiv preprint arXiv:1403.5045}.

\bibitem[Lai and Robbins, 1985]{lai1985asymptotically}
Lai, T.~L. and Robbins, H. (1985).
\newblock Asymptotically efficient adaptive allocation rules.
\newblock {\em Advances in applied mathematics}, 6(1):4--22.

\bibitem[Lai and Xing, 2010]{lai2010sequential}
Lai, T.~L. and Xing, H. (2010).
\newblock Sequential change-point detection when the pre-and post-change
  parameters are unknown.
\newblock {\em Sequential analysis}, 29(2):162--175.

\bibitem[Li et~al., 2011]{li2011unbiased}
Li, L., Chu, W., Langford, J., and Wang, X. (2011).
\newblock Unbiased offline evaluation of contextual-bandit-based news article
  recommendation algorithms.
\newblock In {\em Proceedings of the fourth ACM international conference on Web
  search and data mining}, pages 297--306. ACM.

\bibitem[Littlestone and Warmuth, 1994]{littlestone1994weighted}
Littlestone, N. and Warmuth, M.~K. (1994).
\newblock The weighted majority algorithm.
\newblock {\em Information and computation}, 108(2):212--261.

\bibitem[Liu et~al., 2017]{liu2017change}
Liu, F., Lee, J., and Shroff, N. (2017).
\newblock A change-detection based framework for piecewise-stationary
  multi-armed bandit problem.
\newblock {\em arXiv preprint arXiv:1711.03539}.

\bibitem[Mellor and Shapiro, 2013]{mellor2013thompson}
Mellor, J. and Shapiro, J. (2013).
\newblock Thompson sampling in switching environments with bayesian online
  change detection.
\newblock In {\em Proceedings of the Sixteenth International Conference on
  Artificial Intelligence and Statistics}, pages 442--450.

\bibitem[Page, 1954]{page1954continuous}
Page, E.~S. (1954).
\newblock Continuous inspection schemes.
\newblock {\em Biometrika}, 41(1/2):100--115.

\bibitem[Schwartz et~al., 2017]{schwartz2017customer}
Schwartz, E.~M., Bradlow, E.~T., and Fader, P.~S. (2017).
\newblock Customer acquisition via display advertising using multi-armed bandit
  experiments.
\newblock {\em Marketing Science}.

\bibitem[Siegmund, 1985]{siegmund1985sequential}
Siegmund, D. (1985).
\newblock {\em Sequential analysis: tests and confidence intervals}.
\newblock Springer Science \& Business Media.

\bibitem[Thompson, 1933]{thompson1933likelihood}
Thompson, W.~R. (1933).
\newblock On the likelihood that one unknown probability exceeds another in
  view of the evidence of two samples.
\newblock {\em Biometrika}, 25(3/4):285--294.

\bibitem[Vermorel and Mohri, 2005]{vermorel2005multi}
Vermorel, J. and Mohri, M. (2005).
\newblock Multi-armed bandit algorithms and empirical evaluation.
\newblock In {\em ECML}, volume 3720, pages 437--448. Springer.

\bibitem[Villar et~al., 2015]{villar2015multi}
Villar, S.~S., Bowden, J., and Wason, J. (2015).
\newblock Multi-armed bandit models for the optimal design of clinical trials:
  benefits and challenges.
\newblock {\em Statistical science: a review journal of the Institute of
  Mathematical Statistics}, 30(2):199.

\bibitem[Willsky and Jones, 1976]{willsky1976generalized}
Willsky, A. and Jones, H. (1976).
\newblock A generalized likelihood ratio approach to the detection and
  estimation of jumps in linear systems.
\newblock {\em IEEE Transactions on Automatic control}, 21(1):108--112.

\bibitem[Yu and Mannor, 2009]{yu2009piecewise}
Yu, J.~Y. and Mannor, S. (2009).
\newblock Piecewise-stationary bandit problems with side observations.
\newblock In {\em Proceedings of the 26th Annual International Conference on
  Machine Learning}, pages 1177--1184. ACM.

\end{thebibliography}
\endgroup


\newpage
\onecolumn
\appendix
\begin{center}
\textbf{\Large Appendices}
\end{center}

\section{Detailed Proofs of Theorem~\ref{thm:bandit}}
\label{sec:proofs}

\subsection{Proof of Lemma~\ref{lem:stationary_bandit}}
\label{sec:proof_stationary_bandit}

\begin{proof}[Proof of Lemma \ref{lem:stationary_bandit}]
Define that $R(s) = \sum_{t=1}^s \max_{k\in \mathcal{K}} X_{k,t} - X_{A_t, t}$, then we have that $\mathcal{R}(T) = \mathbb{E}[R(T)]$.

\begin{equation}
\begin{split}
\mathcal{R}(T) = & \mathbb{E}[R(T)] \\
= &\mathbb{E}[R(T) \mathbb{I}\{\tau_1 \leq T\}] + \mathbb{E}[R(T) \mathbb{I}\{\tau_1 > T\}] \\
\leq & T \cdot \mathbb{P}(\tau_1 \leq T) + \mathbb{E}[R(T) \mathbb{I}\{\tau_1 > T\}]. 
\end{split}
\end{equation}

Define $N_k(t)$ as the number of times arm $k$ has been selected by the Algorithm \ref{alg:bandit} in the first $t$ steps, i.e., $N_k(t)=\sum_{i=1}^t \mathbb{I}(A_i = k)$. No false alarm is raised and we do not restart the UCB algorithm if the evnet $\{\tau_1 >T\}$ happens. Therefore, we have the following equation:
\begin{equation}
\begin{split}
\mathbb{E}[R(T) \mathbb{I}\{\tau_1>T\}]  =& \sum_{\Delta_k^{(1)}>0} \Delta_k^{(1)} \cdot \mathbb{E}[N_k(T) \mathbb{I}\{\tau_1>T\}].
\end{split}\nonumber
\end{equation}
Thus, it remains to show an upper bound for $\mathbb{E}[N_k(T) \mathbb{I}\{\tau_1>T\}]$.  
By the definition of Algorithm \ref{alg:bandit}, we have that for any $k\in \mathcal{K}$
\begin{equation}
\begin{split}
&N_k(T) \mathbb{I}\{\tau_1>T\} \\
=&\sum_{t=1}^T \mathbb{I}\{A_t = k, \tau_1>T,N_k(t) < l\} \\
+& \sum_{t=1}^T \mathbb{I}\{A_t = k, \tau_1>T,N_k(t) \geq l\}  \\
\leq& l+ \sum_{t=1}^T \mathbb{I}\{t \mbox{ mod } \floor{K/\gamma} = k, N_k(t)\geq l \}  \\
+&\sum_{t=1}^T \mathbb{I}\{k = \mathop{argmax}_{\tilde{k}\in \mathcal{K}} \mbox{UCB}_{\tilde{k}}, N_k(t) \geq l \}\\
\leq & l+\ceil{T\gamma/K} + \sum_{t=1}^T \mathbb{I}\{k = \mathop{argmax}_{\tilde{k}\in \mathcal{K}} \mbox{UCB}_{\tilde{k}}, N_k(t) \geq l \},
\end{split}
\end{equation}
where the first inequality is due to the fact that if the event $\{A_t=k, \tau_1>T \}$ happens, then we do not restart the UCB algorithm before time $T$ and the selection of the $k$th arm is based on either the uniform sampling or the largest UCB index in a stochastic bandit setting. 
Setting $l=\ceil{8\log T/(\Delta_k^{(1)})^2}$ and following the same argument as in the proof of Theorem 1 of \cite{auer2002finite}, we have that 
\[
\mathbb{E}[N_k(T) \mathbb{I}\{\tau_1>T\}]\leq \frac{T\gamma}{K} + \frac{8\log T}{(\Delta_k^{(1)})^2} + 1+\frac{\pi^2}{3}+K.
\]
Summing over $k \in \mathcal{K}$ we prove the result. 

\end{proof}

\subsection{Proof of Lemma~\ref{lem:falsealarm}}
\label{sec:proof_falsealarm}

\begin{proof}[Proof of Lemma \ref{lem:falsealarm}]
Define $\tau_{k,1}$ as the first detection time of the $k$th arm. Then, $\tau_1 = \min_{k\in \mathcal{K}} \{\tau_{k,1}\}$ since Algorithm \ref{alg:bandit} is designed to reinitialize the UCB algorithm if a change is detected on any of the $K$ arms. Using the union bound, we have that 
\[
\mathbb{P}(\tau_1\leq T) \leq \sum_{k=1}^K \mathbb{P}(\tau_{k,1}\leq T).
\]
Define that for any $k \in \mathcal{K}$ and $t\geq w$ 
\begin{equation}
S_{k,t} = \left| \sum_{i=t-w/2+1}^{i=t} Z_{k,i} - \sum_{i=t-w+1}^{t-w/2} Z_{k,i} \right|.
\label{eq:stat}
\end{equation}
Then, for any $k\in \mathcal{K}$, $\tau_{k,1}$ is given by
\[
\tau_{k,1} = \inf\{t \geq w:  S_{k,t} > b\}
\]
Let $\mathbb{Z}^+$ be the set of all positive integers. Define that for any $0\leq j\leq w-1$ the stopping times
\begin{equation}
\begin{split}
&\tau_{k,1}^{(j)} = \inf \left\{t=j+nw, n\in \mathbb{Z}^+: S_{k,t} > b \right\}.
\end{split}\nonumber
\end{equation}
We have that $\tau_{k,1} = \min\{\tau_1^{(0)},\ldots, \tau_1^{(w-1)}\}$. Note that under the stationary environment, for any $0\leq j\leq w-1$, $\tau_{k,1}^{(j)}$ is a random variable with the geometric distribution 
\[
\mathbb{P}(\tau_{k,1}^{(j)} =nw+j) = p(1-p)^{n-1},
\]
where $p=\mathbb{P}(S_{k,w} >b) $. Therefore, considering union bound we have that for any $k\in \mathcal{K}$
\[
\mathbb{P}(\tau_{k,1}\leq T) \leq w\left(1-(1-p)^{\floor{T/w}} \right).
\]
The remaining task is to find an upper bound for $p$.
Note that for any $k\in \mathcal{K}$, $S_{k,w}$ is a random variable with zero mean. We have by the McDiarmid's inequality and the union bound that 
\[
p \leq 2 \cdot \exp\left(-\frac{2b^2}{w}\right). 
\]
Combining the above analysis we conclude the result. 
\end{proof}

\subsection{Proof of Lemma~\ref{lem:detection}}
\label{sec:proof_detection}

\begin{proof}[Proof of Lemma \ref{lem:detection}]

Assume that $\delta_{\tilde{k}}^{(1)} \geq 2b/w+c$ for some $\tilde{k} \in \mathcal{K}$. Since the uniformly sampling scheme (step 2-4 of Algorithm \ref{alg:bandit}) guarantees that in any time interval with length larger than $L/2$ each arm is sampled at least $w/2$ times, conditioning on $\{ \tau_1 > \nu_1 \}$, we have that 
\begin{equation}
\begin{split}
&\mathbb{P}(\nu_1<\tau_1\leq \nu_1+L/2 \mid \tau_1>\nu_1)  \\
\geq& \mathbb{P}
\left(S_{\tilde{k},w}>b\right) \\
\geq& 1-2\exp\left(-\frac{(w|\delta_{\tilde{k}}^{(1)}|/2-b)^2}{w}\right) \\
\geq & 1-2\exp\left(-\frac{wc^2}{4}\right),
\end{split}
\end{equation}
where $S_{k,t}$ is defined in (\ref{eq:stat}) and we use McDiarmid's inequality in the second inequality. 
\end{proof}

\subsection{Proof of Lemma \ref{lem:DD}}
\label{sec:proof_dd}

\begin{proof}[Proof of Lemma \ref{lem:DD}]
First, define that $N = \ceil{b/\delta_{\tilde{k}}^{(1)}} \cdot \ceil{K/\gamma}$, we obtain a simple upper bound for the EDD as follows. 
\begin{equation}
\begin{split}
&\mathbb{E}[\tau_1 - \nu_1 \mid \nu_1<\tau_1\leq \nu_1+L/2 ] \\
=& \sum_{i=1}^{L/2} \mathbb{P}(\tau_1 \geq \nu_1+i\mid \nu_1<\tau_1\leq \nu_1+L/2) \\
\leq& N + \sum_{i=N}^{L/2} \left(\mathbb{P}(\tau_1 \geq \nu_1+i\mid \nu_1<\tau_1\leq \nu_1+L/2) \right).
\end{split}\nonumber
\end{equation}
Since the uniformly sampling scheme guarantees that we have at least $i/\ceil{K/\gamma}$ samples from each arm within $i$ time steps, we use McDiarmid's inequality and Lemma \ref{lem:detection} to have that 
\begin{equation}
\begin{split}
&\sum_{i=N}^{L/2} \left(\mathbb{P}(\tau_1 \geq \nu_1+i\mid \nu_1<\tau_1\leq \nu_1+L/2) \right) \\
=& \sum_{i=N}^{L/2} \frac{\mathbb{P}(\nu_1+i \leq \tau_1 \leq \nu_1+L/2 \mid \tau_1>\nu_1)}{\mathbb{P}(\nu_1 \leq \tau_1 \leq \nu_1+L/2 \mid \tau_1>\nu_1)} \\
\leq & \frac{1}{1-2\exp\left(-wc^2/4\right)} \cdot \sum_{i=N}^{L/2} 2\exp\left(-\frac{(i/\ceil{(K/\gamma)} \delta_{\tilde{k}}^{(1)}-b)^2}{w} \right) \\
\leq & \frac{\ceil{K/\gamma}}{1-2\exp\left(-wc^2/4\right)} \cdot \sum_{j=\ceil{b/\delta_{\tilde{k}}^{(1)}}}^{w/2} 2\exp\left(-\frac{(j \delta_{\tilde{k}}^{(1)}-b)^2}{w} \right).
\end{split}\nonumber
\end{equation} 
Define $q=\ceil{(w/2)\cdot \delta_{\tilde{k}}^{(1)}} -b$ and we have $q>1$ from the assumption that $\delta_{\tilde{k}}^{(1)}>2b/w+c$. Combining the above analysis, we have that
\begin{equation}
\begin{split}
& \left(1-2\exp\left(-wc^2/4\right)\right) \cdot \mathbb{E}[\tau_1 - \nu_1 \mid \nu_1<\tau_1\leq \nu_1+L/2 ] \\
\leq& N + \ceil{K/\gamma} \cdot \sum_{j=\ceil{b/\delta_{\tilde{k}}^{(1)}}}^{w/2} 2\exp\left(-\frac{(j\delta_{\tilde{k}}^{(1)}-b)^2}{w} \right) \\
\leq & N + 2\ceil{K/\gamma} \cdot \left(1+ \int_{1}^{q} \exp\left(-\frac{l^2}{w}\right)dl  \right) \\
\leq & N + 2\ceil{K/\gamma} \cdot \\
 & \left[1+\sqrt{w} \left( 1-\frac{1}{\sqrt{w}} + \int_1^{q/\sqrt{w}} \exp(-u^2)du \right)  \right] \\
\leq& N + 2\ceil{K/\gamma} \cdot\left(\sqrt{w} + \sqrt{w}\int_1^{q/\sqrt{w}} u\exp(-u^2)du \right) \\
\leq& \left(\ceil{b/\delta_{\tilde{k}}^{(1)}} + 3\sqrt{w}\right) \cdot \ceil{K/\gamma},
\end{split}\nonumber
\end{equation}
where we transform $l$ into $u=l/\sqrt{w}$ in the third inequality and we use the fact that $\exp(-u^2) \leq u\exp(-u^2), u\geq 1$ in the fourth inequality. 
On the other hand, by the definition of the conditioning event we also have that
\[
\left(1-2\exp\left(-wc^2/4\right)\right) \cdot \mathbb{E}[\tau_1 - \nu_1 \mid \nu_1<\tau_1\leq \nu_1+L/2 ] \leq L/2.
\]
Combining the above analysis we conclude the result. 
\end{proof}

\subsection{Proof of Theorem \ref{thm:bandit}}
\label{sec:proof_main}

\begin{proof}[Proof of Theorem \ref{thm:bandit}]
%
Recall that $L=w\ceil{K/\gamma}$. Algorithm \ref{alg:bandit} guarantees that in any time interval with length larger than $L$ each arm is sampled at least $w$ times. 
Define events $F_i = \{\tau_i>\nu_i\}, 1\leq i\leq M-1$. Define events $D_i = \{\tau_i\leq \nu_i+L/2\}, 1\leq i\leq M-2$ and event $D_{M-1} = \{\tau_{M-1} \leq T \}$. Therefore, the event $F_iD_i$ is the good event where the $i$th change can be detected correctly and efficiently. Define that $R(s) = \sum_{t=1}^s \max_{k\in \mathcal{K}} X_{k,t} - X_{A_t, t}$, then we have that $\mathcal{R}(T) = \mathbb{E}[R(T)]$. Equipped with the sequence of good events, we have that 
\begin{equation}
\begin{split}
\mathcal{R}(T) = \mathbb{E}[R(T)] 
\leq & \mathbb{E}[R(T) \mathbb{I}\{F_1\}] + T\cdot (1-\mathbb{P}(F_1)) \\
\leq & \mathbb{E}[R(\nu_1) \mathbb{I}\{F_1\}] + \mathbb{E}[R(T) - R(\nu_1)] + 1 \\
\leq & \tilde{C}_1 + \gamma \nu_1 + \mathbb{E}[R(T) - R(\nu_1)] +1.
\end{split}\nonumber
\end{equation}
Above, the second inequality is due to Lemma \ref{lem:falsealarm} that $\mathbb{P}(F_1) \geq 1-1/T$ provided that $b=[w\log(2KT^2)/2]^{1/2}$ and the third inequality is due to the bound in the end of Lemma \ref{lem:stationary_bandit}, which is the bound for the UCB algorithm in a stochastic bandit setting. 

The next step is to bound $\mathbb{E}[R(T) - R(\nu_1)]$. Using the law of total expectation, we have that
\begin{equation}
\begin{split}
& \mathbb{E}[R(T) - R(\nu_1)] \\
\leq &  \mathbb{E}[R(T) - R(\nu_1) \mid F_1D_1] 
+ T \cdot (1-\mathbb{P} (F_1D_1)) \\
\leq &  \mathbb{E}[R(T) - R(\nu_1) \mid F_1D_1] + 2,
\end{split}\nonumber
\end{equation} 
where the last inequality is due to Lemma \ref{lem:detection} that we have $\mathbb{P}(D_1 \mid F_1) \geq 1-1/T$ provided that $c=2\sqrt{\log(2T)/w}$ and the fact that $\mathbb{P}(F_1D_1) = \mathbb{P}(D_1 \mid F_1) \cdot \mathbb{P}(F_1)$ for any probability measure $\mathbb{P}$. 

Therefore, the remaining task is to bound $ \mathbb{E}[R(T) - R(\nu_1) \mid F_1D_1]$. Denote $\tilde{\mathbb{E}}$ as the expectation according to the piecewise-stationary bandit starts from the second segment. Further splitting the regret, we have that 
\begin{equation}
\begin{split}
&\mathbb{E}[R(T) - R(\nu_1) \mid F_1D_1]  \\
\leq & \mathbb{E}[R(T) - R(\tau_1) \mid F_1D_1] 
+ \mathbb{E}[R(\tau_1) - R(\nu_1) \mid F_1D_1] \\
\leq &  \tilde{\mathbb{E}}[R(T-\nu_1) ] 
+  \mathbb{E}[\tau_1 - \nu_1 \mid F_1D_1] \\
\leq & \tilde{\mathbb{E}}[R(T-\nu_1) ] 
+ \min(L/2, (\ceil{b/\delta^{(1)}} + 3\sqrt{w}) \cdot \ceil{K/\gamma})/\left(1-1/T\right)
\end{split}\nonumber
\end{equation}
where the second inequality is due to the renewal property given that the whole algorithm restarts in the time interval between $\nu_1$ and $\nu_1+L/2$ and the last inequality is due to Lemma \ref{lem:DD} by setting $c=2\sqrt{\log(2T)/w}$.

Combining the above analysis, we bound the regret in a recursive manner as follows (assuming $T\geq 2$):
\begin{equation}
\begin{split}
&\mathbb{E}[R(T)] \leq \tilde{\mathbb{E}}[R(T-\nu_1)] + \tilde{C}_1 \\
& + \gamma \nu_1+  2\min(L/2, (\ceil{b/\delta^{(1)}} + 3\sqrt{w}) \cdot \ceil{K/\gamma}) + 3.
\end{split} \nonumber
\end{equation}
The recursive manner means that we can apply the same method to bound $ \tilde{\mathbb{E}}[R(T-\nu_1)]$, by conditioning on the event $D_2F_2$. Repeating this procedure $M-1$ times, we obtain that
\begin{equation}
\begin{split}
\mathbb{E}[R(T)] \leq & \textstyle\sum_{i=1}^M \tilde{C}_i + \gamma T \\
+ &  \textstyle\sum_{i=1}^{M-1}\frac{2K \cdot \min(\frac{w}{2},\ceil{\frac{b}{\delta^{(i)}}}+3\sqrt{w})}{\gamma}+ 3M.
\end{split}\nonumber
\end{equation}

\end{proof}

\end{document}